\icmltitlerunning{On Regularization and Inference with Label Constraints}
\begin{document}

\twocolumn[
\icmltitle{
    \vspace*{-0.5in}
    {{\small \hfill ICML'2023 }\\
    \vspace*{.25in}}
    On Regularization and Inference with Label Constraints
}

\icmlsetsymbol{equal}{*}

\begin{icmlauthorlist}
\icmlauthor{Kaifu Wang}{upenn}
\icmlauthor{Hangfeng He}{rochester}
\icmlauthor{Tin D. Nguyen}{mit}
\icmlauthor{Piyush Kumar}{str}
\icmlauthor{Dan Roth}{upenn}
\end{icmlauthorlist}

\icmlaffiliation{upenn}{University of Pennsylvania, Philadelphia, PA, USA}
\icmlaffiliation{mit}{Massachusetts Institute of Technology, Cambridge, MA, USA}
\icmlaffiliation{str}{Systems and Technology Research, Woburn, MA USA}
\icmlaffiliation{rochester}{University of Rochester, Rochester, NY, USA (Part of the work done while at the University of Pennsylvania.)}

\icmlcorrespondingauthor{Piyush Kumar}{piyush.kumar@str.us}
\icmlcorrespondingauthor{Dan Roth}{danroth@seas.upenn.edu}

\icmlkeywords{Machine Learning, ICML}

\vskip 0.3in
]

\printAffiliationsAndNotice{}

\begin{abstract}
    Prior knowledge and symbolic rules in machine learning are often expressed in the form of label constraints, especially in structured prediction problems.
    In this work, we compare two common strategies for encoding label constraints in a machine learning pipeline, \emph{regularization with constraints} and \emph{constrained inference}, by quantifying their impact on model performance.
    For regularization, we show that it narrows the generalization gap by precluding models that are inconsistent with the constraints. However, its preference for small violations introduces a bias toward a suboptimal model.
    For constrained inference, we show that it reduces the population risk by correcting a model's violation, and hence turns the violation into an advantage.
    Given these differences, we further explore the use of two approaches together and propose conditions for constrained inference to compensate for the bias introduced by regularization, aiming to improve both the model complexity and optimal risk.
\end{abstract}

\section{Introduction}

Domain knowledge in machine learning is often framed as constraints on the output label space.
Such label constraints have been widely identified in natural language processing tasks 
\citep{RothYi04, 
MartinsSmXi09a, 
NFWR18, 
lu-etal-2022-neurologic} 
and studied in the context of structured prediction
\citep{PRYZ05, 
ChangRaRo07, 
CRRR08, 
GGGT10, 
ChangRaRo12, 
PanMeSr20}.
For example, in temporal reasoning \citep{NFWR18} where the model is asked to label the relations (``before'' or ``after'') among a set of events, the assigned labels will need to satisfy a transitivity constraint which means, for example, the facts that an event $E_1$ is after $E_2$ and that $E_2$ is after $E_3$ imply that $E_1$ is after $E_3$.
The central question is how to encode such a constraint into a learning algorithm to ensure better performance and generalization of the learned model.

Practitioners have developed two techniques to encode a label constraint in a machine learning pipeline. The first, called \emph{regularization with constraints}, penalizes a model for its violation of the constraint in addition to the classification loss \citep{GGGT10, Li2019ALF}. The second, called \emph{inference with constraints}, modifies prediction rules directly by enforcing strictly constrained inference \citep{RothYi04, scholak-etal-2021-picard} or balancing the original model's output with the constraint in a soft way \citep{CRRR08, ChangRaRo12}.

Although these two learning algorithms have been shown to be empirically successful, we are not aware of theoretical analyses that elucidate each algorithm's advantages or disadvantages in comparison with the other one. Natural questions include, how do these two differ in their impact on the learned model? Moreover, in practice, the constraints could be noisy i.e.\ \citep{hu-etal-2016-harnessing, WZCR21}. In such cases, do they still improve the model performance? If so, by how much?

Focusing on multiclass classification with label constraints, we compare regularization with constraints and constrained inference.
For each algorithm, we quantify its optimal risk (aka approximation error) and its generalization gap (aka estimation error).
Specifically, in Section \ref{learning}, we show that regularization with constraints achieves a smaller generalization error by reducing the model complexity but will introduce a bias towards a suboptimal model if the risk minimizer and violation minimizer does not coincide.
In Section \ref{inference}, we study a broad family of constrained inference model called Constrained Conditional Model (CCM) \citep{CRRR08, ChangRaRo12} and point out that the constrained inference could reduce the risk of a model if and only if the model violates the constraint more than the true data distribution.
This further suggests finding models with higher violation, which contrasts the learning objective used in regularization that discourages violation.
Given these contrasts, we further study the combination and interaction of the two methods in Section \ref{combo} and describe how constrained inference could compensate for the bias introduced by regularization.

To the best of our knowledge, our analysis is the first to provide a theoretical view on comparing the two approaches. We believe in the importance of this comparison and hope to bring this problem to the attention of the machine learning community.
In summary, our contributions include:
\begin{enumerate}
    \item 
    We provide an error bound (Theorem \ref{bound-regularization}) that describes the tradeoff between the generalization gap and the optimal risk when performing regularization with constraints.

    \item 
    We propose a sufficient and necessary condition (Theorem \ref{change-of-risk}) for constrained inference to improve a model by quantifying its reduction in risk.
    Based on this, we further argue that constrained inference, when used at training time, implicitly modifies the training objective in an opposite direction as in the regularization approach (Proposition \ref{on-training}).

    \item 
    We study the combination of regularization and constrained inference, and propose sufficient (Theorem \ref{combo-help}) as well as necessary (Theorem \ref{combo-not-help}) conditions for the combined algorithm to achieve improvement in both optimal risk and model complexity.
    
\end{enumerate}

Proofs of all the theoretical results are in the appendix.

\section{Preliminaries}

Our goal is to learn a mapping from the instance space $\+X$ to the output space $\+Y$. 
The learner has access to a set of labeled training data $S_\:L$ of size $m_{\:L}$, which contains i.i.d. samples of a distribution $\-P$ on $\+X \times \+Y$. 
The marginal distribution of $X$ is denoted as $\-P_X$.
In this work, we assume the ground truth label associated with $x \in \+X$ is generated by a deterministic mapping $y_\ora:\+X \rightarrow \+Y$ ($_\ora$ is short for oracle). We also denote the true label as $y_\ora$ when the context is clear.

\paragraph{Model.}
The scoring class $\+F$ contains scoring functions $f:\+X \times \+Y \rightarrow \-R$. 
We will also call a $f\in\+F$ a classifier.
Let $\Delta_{\+Y}$ be the $|\+Y|$-dimensional probability simplex. Each scoring function 
induces a probabilistic prediction $\-P_f(\cdot|x) \in \Delta_{\+Y}$ by performing \emph{softmax inference} as $\-P(y|x) \propto \exp(f(x,y))$. 

\paragraph{Loss Function.}
The prediction of $f$ at $x$ is evaluated by the \emph{classification  error} (or $\ell^1$ loss) $L(x,y_\ora,f) := 1 - \-P_f(y_\ora|x)$, which is half the $\ell^1$ distance the between the one-hot distribution $e_{y_\ora}$ and $\-P_f$ on $\Delta_\+Y$.
It can also be viewed as a smoothed version of the standard zero-one loss in the sense that $\lim_{t \rightarrow \infty} L(x,y_\ora,tf) = \-1\{\argmax_{y\in\+Y}f(x,y) \ne y_\ora\}$.
More background on the definition of the $\ell^1$ loss are provided in Appendix \ref{loss}.
A scoring function $f$ is evaluated by its \emph{risk} $R(f) := \-E[L(x,y_\ora,f)]$. The empirical estimate of the risk using the labeled examples in $S_{\:L}$ is denoted as $\widehat{R}(f, S_\:L)$. 
We also consider the cross-entropy surrogate loss defined as $L_\ce (x,y_\ora,f) := -\log \-P_f(y_\ora|x)$ and refer its expectation $R_\ce(f) = \-E[L_\ce(x,y_\ora,f)]$ as cross-entropy risk.

\paragraph{Label constraint.}
A \emph{label constraint} (or \emph{constraint} for short) is a deterministic mapping $C:\+X \rightarrow 2^\+Y-\{\emptyset\}$. Namely, $C$ maps an instance $x$ to a nonempty subset of $\+Y$, which may or may not contain the true label $y_\ora(x)$. In particular, we say a constraint $C$ is \emph{noise-free} if $\-P(y_\ora\in C(x))=1$. Otherwise, $C$ is said to be a \emph{noisy} constraint and its noise rate is denoted as $V_\ora := \-P(y_\ora(x) \notin C(x))$.

\paragraph{Violation.}
A constraint $C$ is equipped a \emph{violation function}, which is an indicator function $v_C(x,y) = \-1\{y\notin C(x)\}$. We also overload the notation $v$ and define the violation of a classifier $f$ at an instance $x$ as $v_C(x,f):= 1-\-P_f(C(x)|x) = \sum_{y\notin C(x)} \-P_f(y|x)$. Its expectation is $V_C(f):= \-E[v_C(x,f)]$. We elide the subscript $C$ and write them as $v(x,y), v(x,f)$ and $V(f)$ when the context is clear. Similar to the classification error, we consider a cross-entropy surrogate of the violation function defined as $v_\ce(x,f):=-\log\-P_f(C(x))$ and its expectation $V_\ce(f) = \-E[v_\ce(x,f)]$.

\paragraph{Rademacher complexity.}
We use the following version of Rademacher complexity that is adopted from \citet{CKMY16} to characterize the generalization ability of the scoring space of multiclass classifiers $\+F$:

\begin{definition}[Rademacher complexity of $\+F$] 
    The \emph{empirical Rademacher complexity} of scoring class $\+F$ with respect to a set $S = \{x_i\}_{i=1}^m$ that contains $m$ samples of the instance is defined as
    \begin{equation}
        \label{empirical-factor-graph-complexity}
        \begin{aligned}
            \widehat{\mathfrak{R}}_m(\+F;S)
            := 
            \frac{1}{m}
            \-E_\epsilon\left[
                \sup_{f\in \+F} 
                \sum_{i=1}^m 
                \sum_{y\in \+Y} 
                \epsilon_{i,y} f(x_i,y)
            \right]
        \end{aligned}
    \end{equation}
    where $\epsilon=(\epsilon_{i,y})_{i\in [m],y\in \+Y}$ are independent Rademacher random variables, each of which is uniformly distributed over $\{-1,+1\}$. The \emph{Rademacher complexity} of scoring class $\+F$ is the expectation of the empirical version:
    \begin{equation}
    \label{factor-graph-complexity}
    \begin{aligned}
        {\mathfrak{R}}_m(\+F)
        := \-E_{S \sim \-P_X^m}[\widehat{\mathfrak{R}}_m(\+F;S)]
    \end{aligned}
    \end{equation}
\end{definition}

This definition of Rademacher complexity is a special case of the \emph{factor graph complexity} proposed by \citet{CKMY16}, which is defined for more general structured prediction models. It is hence possible to extend our results of the generalization bounds to structured models by replacing the Rademacher complexity with factor graph complexity. In this work, we focus on multiclass classifiers for the simplicity of presentation.
\section{Regularization with Constraints}
\label{learning}

In a standard machine learning algorithm, the learner receives a set of labeled data $S_\:L \in \cup_{m=1}^\infty(\+X \times \+Y)^m$ and finds the \emph{empirical risk minimizer}, which is defined as $\argmin_{f \in \+F} \hat{R}(f;S_\:L)$.
In this section, we consider a method that modifies this learning objective by adding a regularization term defined with the constraint $C$. Precisely, we consider minimizing an augmented objective defined as
$
\+L_\rho (f)
:= R(f) + \rho V(f) 
$
where $\rho \ge 0$ is a fixed tradeoff parameter. 

The idea of regularizing the model by adding a penalty for the violation of the constraints on an unlabeled dataset is widely adopted in the literature. In particular, the cross entropy violation is known as the \emph{semantic loss} \citep{pmlr-v80-xu18h} in the context of logical constraints. Other designs of the regularization term include using the KL-divergence on the probability space in the \emph{posterior regularization} algorithm \citep{GGGT10} and using the $t$-norms from fuzzy logic \citep{Li2019ALF}.

We will show this algorithm improves the generalization error by reducing the complexity of the scoring space (Theorem \ref{bound-regularization}), but in general leads to a larger classification risk in the long run (Proposition \ref{vio-risk-inequalities}), thus resulting in a tradeoff between estimation and approximation errors. 

\subsection{Semi-supervised Regularization with Constraints}
\label{regularization}

We consider a semi-supervised approach where the learner has access to an unlabeled dataset $S_\:U$ that contains $m_\:U$ independent samples of the instance $X$, resulting in the following definition.

\begin{definition}[ERVM] Given a labeled dataset $S_\:L$ of size $m_\:L$ and an unlabeled dataset $S_\:U$ of size $m_\:U$, a scoring space $\+F$ and a tradeoff parameter $\rho \ge 0$, we define and denote the \emph{empirical risk and violation minimizer} (ERVM) as:
    \begin{equation}
    \begin{aligned}
        \widehat{f}_\rho(S_\:L,S_{\:U})
        := \argmin_{f\in \+F} &
            \left(
                \frac{1}{m_\:L} \sum_{(x,y)\in S_\:L} L(x,y,f)  \right. \\ 
        & \quad \left. + \frac{\rho}{m_\:U} \sum_{x\in S_\:U} v_C(x,f)
            \right).
    \end{aligned}
    \end{equation}
    We also denote the expected version as:
    \begin{equation}
    \label{regularized-objective}
    \begin{aligned}
        f_{\rho}
        & := \argmin_{f \in \+F} R(f) + \rho V_C(f).
    \end{aligned}
    \end{equation}
\end{definition}

For example, with our notation, $\hat{f}_0$ is the ERM and $f_\infty$ is the minimizer of the expected violation function. Notice that the minimizer in general is non-unique. Therefore, when we state any proposition that is related to $f_\rho$ or $\hat{f}_\rho$, we mean the proposition will hold for any of the minimizers.

\subsection{Deviation from The Optimal Risk}

In this section, we study how the risk of the minimizer $f_\rho$ will deviate from the optimal risk in $\+F$. The reason that we are interested in bounding $R(f_\rho)$ is that in general the minimizer $R(f_\rho)$ is non-unique and may have different values of risks. Therefore, to describe the risk of ERVM in the long run (in Theorem \ref{gen-bound}), we provide an upper bound for all the possible risks of $f_\rho$.

\begin{proposition}[Deviation from the optimal risk] 
    \label{vio-risk-inequalities}
    For any constraint $C$ and $\rho \ge 0$, the following holds.
            \begin{equation}
            \begin{aligned}
                {R}(f_0)
                \le {R}(f_\rho) 
                \le {R}(f_0) + \rho ({V}(f_0) - {V}(f_\infty)) 
                .
            \end{aligned}
            \end{equation}
            The same relation also holds for the empirical estimates $\hat R$ and $\hat V$. Moreover, for any $\rho>0$, there exists a scoring space and data distribution so that the RHS can be reached even with a noise-free constraint $C$.
    
\end{proposition}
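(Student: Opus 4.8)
The plan is to derive both inequalities directly from the defining optimality of $f_0$, $f_\infty$ and $f_\rho$, and then to establish tightness by a small explicit construction; the inequalities are routine, and the construction is where the work lies.

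\textbf{The inequalities.} For the left inequality, $f_0$ minimizes $R$ over $\+F$ and $f_\rho\in\+F$, hence $R(f_0)\le R(f_\rho)$. For the right inequality, since $f_\rho$ minimizes $\+L_\rho=R+\rho V$ over $\+F$, comparing its value with that of $f_0$ gives $R(f_\rho)+\rho V(f_\rho)\le R(f_0)+\rho V(f_0)$, i.e.\ $R(f_\rho)\le R(f_0)+\rho\bigl(V(f_0)-V(f_\rho)\bigr)$; replacing $V(f_\rho)$ by the no-larger quantity $V(f_\infty)$ (no larger because $f_\infty$ minimizes $V$) yields the bound. The derivation invokes only ``$f_0$ minimizes $R$'', ``$f_\infty$ minimizes $V$'', and ``$f_\rho$ minimizes $R+\rho V$'', so it applies verbatim to $\hat R,\hat V$ and the corresponding empirical minimizers, and it is immaterial that $\hat R$ is computed on $S_\:L$ while $\hat V$ is computed on $S_\:U$.

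\textbf{Tightness.} I would exhibit a minimal example on a single instance. Let $\-P_X$ be a point mass at $x_0$, take $\+Y=\{1,2,3\}$, $y_\ora(x_0)=1$, and $C(x_0)=\{1,2\}$, which is noise-free; then $R(f)=1-\-P_f(1|x_0)$ and $V(f)=\-P_f(3|x_0)$. Take $\+F=\{g,h\}$ with just two classifiers, each identified with its softmax vector $(\-P_f(1|x_0),\-P_f(2|x_0),\-P_f(3|x_0))$, chosen so that (i) $g$ puts strictly more mass on label $1$ than $h$, making $g$ the unique risk minimizer so $f_0=g$; (ii) $h$ puts strictly less mass on label $3$ than $g$, making $h$ the unique violation minimizer so $f_\infty=h$; and (iii) $\+L_\rho(g)=\+L_\rho(h)$, making both $g$ and $h$ minimizers of $\+L_\rho$. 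Condition (iii) is precisely $R(h)=R(g)+\rho\bigl(V(g)-V(h)\bigr)=R(f_0)+\rho\bigl(V(f_0)-V(f_\infty)\bigr)$, so selecting $f_\rho=h$ — permitted because the regularized minimizer is non-unique — makes $R(f_\rho)$ equal the right-hand side. A routine check shows (i)--(iii) are simultaneously satisfiable with all probabilities strictly positive (hence realized by finite scores) for every $\rho>0$; for instance $h=(\epsilon,\,1-2\epsilon,\,\epsilon)$ and $g=(\epsilon+\delta,\,1-2\epsilon-\delta-\delta/\rho,\,\epsilon+\delta/\rho)$ with $0<\epsilon<\tfrac12$ and $\delta=\tfrac{(1-2\epsilon)\rho}{2(\rho+1)}$ works. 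Taking $S_\:L$ and $S_\:U$ to be copies of $x_0$ (labeled $1$) makes the empirical statement tight as well.

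\textbf{Main obstacle.} The inequalities present no difficulty; the one point needing care in the tightness argument is conceptual rather than computational. Equality along the chain forces \emph{the same} classifier to attain both $\min V$ and $\min(R+\rho V)$ while \emph{a different} classifier attains $\min R$, so the example must be engineered around the non-uniqueness of $f_\rho$; and making the construction valid for arbitrarily large $\rho$ requires the probability gaps to shrink with $\rho$, as reflected in the choice of $\delta$.
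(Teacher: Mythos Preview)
Your argument is correct and follows essentially the same approach as the paper: the two inequalities are derived exactly as the paper does (optimality of $f_0$, then optimality of $f_\rho$ combined with $V(f_\rho)\ge V(f_\infty)$), and the tightness example is the same idea of a two-element scoring space over a single instance with a noise-free constraint. If anything, your tightness construction is more careful: you explicitly arrange $\+L_\rho(g)=\+L_\rho(h)$ and invoke the non-uniqueness of $f_\rho$ to hit the RHS exactly, whereas the paper's example (with the strict inequality $\epsilon_1<\rho\epsilon_2$) only shows the regularized objective prefers the higher-risk classifier without verifying that $R(f_\rho)$ actually equals the RHS.
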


This result shows the minimizer of the regularized objective in general has a suboptimal risk over $\+F$. On the other hand, if the risk minimizer is simultaneously a violation minimizer, i.e., ${V}(f_0) = {V}(f_\infty)$, this relation implies consistency, i.e., $R(f_\rho) = R(f_0)$.
This quantity $V(f_0)$ can be small when the noise rate $V_\ora$ is small and the model is expressive enough (e.g., a deep neural net) to approximate the true model.

\subsection{Generalization Bounds}

Now we discuss how regularization could reduce the complexity of the hypothesis class. The first step is to show that the violation of the target hypothesis is not too large. In particular, the following bound is a direct consequence of minimizing the regularized objective:
\begin{lemma}[Regularization implies small violation] 
    \label{f-rho-violation}
    Let $f_\rho$ be the regularized learning objective defined as in \eqref{regularized-objective}. If the minimum violation in $\+F$ is upper bounded by a known constant $u \ge 0$, i.e., $V(f_\infty) \le u$, then $V(f_\rho) \le 1/\rho + u$.
\end{lemma}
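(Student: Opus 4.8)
The plan is to exploit the defining optimality property of $f_\rho$ together with the boundedness of the $\ell^1$ loss. First I would write down the inequality that follows immediately from $f_\rho$ being a minimizer of $\+L_\rho(f) = R(f) + \rho V(f)$: comparing its objective value against that of the violation minimizer $f_\infty$ gives
\[
R(f_\rho) + \rho V(f_\rho) \;\le\; R(f_\infty) + \rho V(f_\infty).
\]

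Next, I would use that the classification error satisfies $L(x,y_\ora,f) = 1 - \-P_f(y_\ora|x) \in [0,1]$, so that $R(f_\rho) \ge 0$ and $R(f_\infty) \le 1$. Dropping the nonnegative term $R(f_\rho)$ from the left-hand side, bounding $R(f_\infty) \le 1$ on the right, and invoking the hypothesis $V(f_\infty) \le u$ yields $\rho V(f_\rho) \le 1 + \rho u$. Dividing through by $\rho$ gives $V(f_\rho) \le 1/\rho + u$, as desired; the edge case $\rho = 0$ is vacuous since then the claimed bound is infinite.

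There is essentially no technical obstacle here. The only points that require a moment's care are confirming that the loss lies in $[0,1]$ so that $R(f_\infty) \le 1$ — immediate from the definition of $L$ as one minus a probability — and being explicit that the minimization defining $f_\rho$ is over all of $\+F$, which is what licenses comparing against $f_\infty \in \+F$. I would also note in passing that the same two-line chain applies verbatim to the empirical quantities, since it relies only on pointwise nonnegativity and the pointwise bound $L \le 1$, and that the bound is informative precisely in the large-$\rho$ regime, where it pins $V(f_\rho)$ near the smallest achievable violation $u$.
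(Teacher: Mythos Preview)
Your proof is correct and follows essentially the same approach as the paper: compare the regularized objective at $f_\rho$ against its value at $f_\infty$, drop the nonnegative $R(f_\rho)$, bound $R(f_\infty)\le 1$, and apply the hypothesis $V(f_\infty)\le u$. The paper's argument is the identical four-line chain $\rho V(f_\rho)\le R(f_\rho)+\rho V(f_\rho)\le R(f_\infty)+\rho V(f_\infty)\le 1+\rho V(f_\infty)\le 1+\rho u$, so there is nothing to add.
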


The upper bound $u$ can be set to arbitrarily small by adding a baseline model defined as $f_t(x,y) = t\cdot \-1\{y\in C(x)\}$ and driving $t$ to infinite. This construction is possible due to the fact that the mapping $C$ is known to the learner. The benefits of knowing $C$ will be further explored in Section \ref{inference} when we discuss inference with constraints.

For any $B \ge 0$, we let $\+F_B := \{f \in \+F| V(f) \le B\}$ be the set of classifiers with small violation.
From the above discussion, we know that the target hypothesis $f_\rho$ will lie in a smaller space $\+F_{u+1/\rho}$, which is characterized by the violation function and hence can be identified only with unlabeled data. To this end, we describe how the violation as well as the risk can be estimated with data.

\begin{lemma}[Generalization gap of $\ell^1$ loss and violation]
    \label{gen-bound}
    Given a labeled dataset $S_\:L$ of size $m_\:L$, for any $\delta>0$, with probabilistic at least $1-\delta$, the following inequality holds uniformly for $f \in \+F$:

    \begin{equation}
        \begin{aligned}
            R(f) 
            \le \hat{R}(f;S_\:L) + \mathfrak{R}_{m_\:L}(\+F) + \sqrt{\frac{\log(1/\delta)}{2m_\:L}}
        \end{aligned}
    \end{equation}
    Given a unlabeled dataset $S_\:U$ of size $m_\:U$, for any $\delta>0$, with probabilistic at least $1-\delta$, the following inequality holds uniformly for $f \in \+F$:
    \begin{equation}
        \label{eqn:violation-gen}
        \begin{aligned}
            V(f) 
            \le \hat{V}(f;S_\:U) + \mathfrak{R}_{m_\:U}(\+F) + \sqrt{\frac{\log(1/\delta)}{2m_\:U}}
        \end{aligned}
    \end{equation}
\end{lemma}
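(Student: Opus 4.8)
The plan is to derive both inequalities from a single template: a Rademacher-complexity uniform-convergence bound applied to a class of $[0,1]$-valued functions obtained by composing a softmax map with $\+F$. Observe first that $L(x,y_\ora,f)=1-\-P_f(y_\ora|x)\in[0,1]$ and $v_C(x,f)=1-\-P_f(C(x)|x)\in[0,1]$, and set $\+G_L:=\{(x,y)\mapsto L(x,y,f):f\in\+F\}$ and $\+G_V:=\{x\mapsto v_C(x,f):f\in\+F\}$. For any family of $[0,1]$-valued functions the classical one-sided uniform bound holds: by McDiarmid's bounded-differences inequality the quantity $\sup_{f\in\+F}\big(R(f)-\hat{R}(f;S)\big)$ deviates from its expectation by at most $\sqrt{\log(1/\delta)/(2m)}$ with probability at least $1-\delta$ (one sample change moves it by at most $1/m$ since $L\in[0,1]$), and a symmetrization step bounds that expectation by $2\,\mathfrak{R}_m$ of the class. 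Hence, with probability at least $1-\delta$ and uniformly in $f$, $R(f)\le\hat{R}(f;S_\:L)+2\,\mathfrak{R}_{m_\:L}(\+G_L)+\sqrt{\log(1/\delta)/(2m_\:L)}$ and, by the same argument, $V(f)\le\hat{V}(f;S_\:U)+2\,\mathfrak{R}_{m_\:U}(\+G_V)+\sqrt{\log(1/\delta)/(2m_\:U)}$; the violation bound uses only the unlabeled sample because $v_C(x,f)$ does not reference $y_\ora$.

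It then suffices to show $\mathfrak{R}_m(\+G_L)\le\tfrac12\mathfrak{R}_m(\+F)$ and $\mathfrak{R}_m(\+G_V)\le\tfrac12\mathfrak{R}_m(\+F)$, after which the factors of $2$ cancel and the stated bounds emerge. I would exploit the structure of softmax inference. Fix an instance $x$, write $s_x(f):=(f(x,y))_{y\in\+Y}\in\-R^{|\+Y|}$ for its score vector, and let $\phi_A(u):=\big(\sum_{y\in A}\exp(u_y)\big)\big/\big(\sum_{y'\in\+Y}\exp(u_{y'})\big)$ be the grouped softmax probability of $A\subseteq\+Y$. Then $L(x,y_0,f)=1-\phi_{\{y_0\}}(s_x(f))$ and $v_C(x,f)=\phi_{\+Y\setminus C(x)}(s_x(f))$, and since empirical Rademacher complexity is unchanged by adding a constant and by a global sign flip (the $\epsilon$'s are symmetric with zero mean), $\mathfrak{R}_m(\+G_L)$ equals the complexity of $\{(x,y)\mapsto\phi_{\{y\}}(s_x(f)):f\in\+F\}$. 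A short computation with the softmax Jacobian gives $\|\nabla\phi_A(u)\|_1=2\phi_A(u)(1-\phi_A(u))\le\tfrac12$ for every set $A$ and vector $u$, so each $\phi_A$ is $\tfrac12$-Lipschitz with respect to $\|\cdot\|_\infty$ on the score vector. This is exactly the hypothesis of the contraction lemma for the factor-graph Rademacher complexity of \citet{CKMY16}: composing $\+F$ with per-example $\mu$-Lipschitz maps of the score vectors $s_{x_i}(\cdot)$ rescales the complexity defined in \eqref{empirical-factor-graph-complexity}--\eqref{factor-graph-complexity} by at most $\mu$. Applying it with $\mu=\tfrac12$, using the maps $i\mapsto\phi_{\{y_i\}}$ for $\+G_L$ and $i\mapsto\phi_{\+Y\setminus C(x_i)}$ for $\+G_V$, yields the two complexity inequalities and finishes the proof.

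The delicate part is the bookkeeping of constants: the conclusion must carry the coefficient $1$ in front of $\mathfrak{R}_m(\+F)$, which forces the Lipschitz estimate to be used in its tight form $2\phi_A(1-\phi_A)\le\tfrac12$ and requires the factor-graph contraction of \citet{CKMY16} in its sharp statement (our $\widehat{\mathfrak{R}}_m(\+F;S)$ is a special case of their factor graph complexity, so this is available), so as to avoid the spurious $\sqrt{2}$ that a generic vector-contraction inequality would introduce. A secondary point to check is that we may keep the \emph{population} complexity $\mathfrak{R}_m(\+F)$, rather than its empirical version, throughout: this is automatic if the bounded-differences step is applied directly to $f\mapsto R(f)-\hat{R}(f;S)$ before symmetrizing, so no separate concentration of $\widehat{\mathfrak{R}}_m(\+F;S)$ around $\mathfrak{R}_m(\+F)$ is needed.
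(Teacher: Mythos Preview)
Your overall architecture matches the paper's: bounded-differences plus symmetrization to reduce to a Rademacher term for the loss class, then a vector contraction to pass from that class to $\mathfrak{R}_m(\+F)$. The gap is entirely in the constant bookkeeping you yourself flag as ``delicate.''

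You compute $\|\nabla\phi_A(u)\|_1=2\phi_A(1-\phi_A)\le\tfrac12$, which bounds the Lipschitz constant of $\phi_A$ with respect to the $\ell^\infty$ norm on scores, and then assert that the contraction lemma of \citet{CKMY16} ``rescales the complexity \dots by at most $\mu$'' and that their ``sharp statement'' avoids the $\sqrt{2}$. This is not what their lemma says. As quoted in the appendix (Lemma~\ref{contraction}), their contraction requires $\Phi_i$ to be $\mu_i$-Lipschitz in the \emph{Euclidean} norm and carries an explicit $\sqrt{2}$ factor on the right-hand side. Using your constant $\tfrac12$ (which is also a valid $\ell^2$-Lipschitz bound, since $\|\cdot\|_\infty\le\|\cdot\|_2$) together with the actual lemma yields $2\cdot\sqrt{2}\cdot\tfrac12=\sqrt{2}$ in front of $\mathfrak{R}_m(\+F)$, not $1$.

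The paper closes the gap not by dodging the $\sqrt{2}$ but by computing a tighter $\ell^2$-Lipschitz constant: for a singleton $A=\{y\}$ one has $\|\nabla\phi_{\{y\}}\|_2\le\tfrac{\sqrt{2}}{4}$, and more generally $\|\nabla\phi_A\|_2\le\tfrac14\sqrt{1/|A|+1/(|\+Y|-|A|)}$. Feeding this into Lemma~\ref{contraction} gives $2\cdot\sqrt{2}\cdot\tfrac{\sqrt{2}}{4}=1$, which is exactly the coefficient you need. So the repair is simple: replace your $\ell^1$-gradient estimate by the $\ell^2$-gradient bound and keep the $\sqrt{2}$ from the contraction; the two then cancel. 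As a bonus, the sharper $\ell^2$ estimate for general $A$ is what produces the refined violation bound with coefficient $\tfrac{\sqrt{2}}{2}\sqrt{1/c_0+1/(c-c_0)}$ mentioned after the lemma.
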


The proof of this result relies on a contraction lemma established in \citet{CKMY16}, which was used to analyze the argmax inference with margin losses. Our analysis extends their results to softmax inference, which may be of independent interest. 

Furthermore, if the size of the constrained set $C(x)$ is a constant, namely $|C(x)|=c_0 < c = |\+Y|$ for all $x \in \+X$, then the Rademacher complexity term of equation \eqref{eqn:violation-gen} can be improved to $\frac{\sqrt{2}}{2}\sqrt{\frac{1}{c-c_0} + \frac{1}{c_0}} \mathfrak{R}_{m_\:U}(\+F)$ (see the discussion in the proof). 
This term is symmetric with the transformation $c_0 \mapsto c-c_0$, due to the fact that estimating the violation $V_C$ of a constraint $C$ is equivalent to estimating $V_{\+Y-C}$.
In particular, when $c_0 < c/2$, if the constraint is more restrictive and informative (so that $c_0$ is small), it can be more difficult to estimate the violation.

\begin{remark}[Relation to cross-entropy loss]
Assuming $\lim_{m 
\rightarrow \infty} \mathfrak{R}_m(\+F) = 0$, this result implies $\+L_\rho$ can be approximated by its empirical version $\hat{\+L}_\rho$ with sufficient amount of data. On the other hand, since $\hat{\+L}_\rho$ is upper bounded by its cross-entropy surrogate $\hat{R}_\ce + \rho \hat{V}_\ce$, we further have that
\begin{equation}
\begin{aligned}
    \+L_\rho(f) 
    \le \hat{R}_\ce(f,S_\:L) + \rho \hat{V}_\ce(f,S_\:U) + o_{m_\:L, m_\:U}(1)
\end{aligned}
\end{equation}
where $o_{m_\:L, m_\:U}(1)$ converges to 0 as $m_\:L, m_\:U \rightarrow \infty$.
Therefore, in practice one can  minimize this upper bound by solving the convex surrogate problem
\begin{equation}
\label{regularized-objective-ce}
\begin{aligned}
    \min_{f \in \+F} \hat{R}_\ce(f,S_\:L) + \rho \hat{V}_\ce(f,S_\:U).
\end{aligned}
\end{equation}
where $\hat{R}_\ce(f,S_\:L)$ and $V_\ce(f,S_\:U)$ are the empirical average of the cross-entropy loss and violation.
\end{remark}

Finally, using these results, we bound the risk of the classifier learned by ERVM. For simplicity, we will denote the generalization gap $B(\delta, m, \+F) := \mathfrak{R}_{m}(\+F) + 2\sqrt{\frac{\log(1/\delta)}{2m}}$.

\begin{theorem}[Error bound]
\label{bound-regularization}
    We have with probability at least $1-6\delta$ that
    \begin{equation}
    \label{err-bound}
        \begin{aligned}
            R(\hat{f}_\rho) 
            & \le R(f_0) + \rho V(f_0) - \rho V(f_\infty)\\
            & \quad + {\mathfrak{R}}_{m_\:L}(\+F_{1/\rho + u + B(\delta, m_\:U, \mathcal{F})}) \\ 
            & \quad + \rho {\mathfrak{R}}_{m_\:U}(\+F_{1/\rho + u + B(\delta, m_\:U, \mathcal{F})}) \\ 
            & \quad + 2 \sqrt{\frac{\log(2/\delta)}{2m_\:L}} + 2\rho \sqrt{\frac{\log(2/\delta)}{2m_\:U}}
        \end{aligned}
    \end{equation}
    where $\mathfrak{R}(\cdot)$ is the Rademacher complexity defined in \eqref{factor-graph-complexity}.
\end{theorem}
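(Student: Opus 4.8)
The plan is to combine the standard ``restricted hypothesis class'' argument with the empirical counterpart of Lemma~\ref{f-rho-violation}. Everything takes place on a high-probability event that intersects six sub-events --- three single-function concentration bounds (for $f_\infty$ on $S_\:U$, and for $f_0$ on each of $S_\:L$ and $S_\:U$) and three uniform convergence bounds (the violation bound over $\+F$ used to confine $\hat f_\rho$, plus the $\ell^1$-loss and violation bounds over the restricted class $\+F_B$ used for the final estimate) --- which is where the $1-6\delta$ comes from; collecting the resulting $\sqrt{\cdot}$ terms produces the stated constants.

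\emph{Step 1: confine $\hat f_\rho$ to a small class.} First I would show that with high probability $\hat f_\rho\in\+F_B$, where $B := 1/\rho + u + B(\delta,m_\:U,\+F)$. Since $\hat f_\rho$ minimizes the empirical objective over $\+F$ and $f_\infty\in\+F$, optimality gives $\hat R(\hat f_\rho;S_\:L)+\rho\hat V(\hat f_\rho;S_\:U)\le \hat R(f_\infty;S_\:L)+\rho\hat V(f_\infty;S_\:U)$; using $\hat R(\hat f_\rho;S_\:L)\ge 0$ and $\hat R(f_\infty;S_\:L)\le 1$ this rearranges to $\hat V(\hat f_\rho;S_\:U)\le 1/\rho + \hat V(f_\infty;S_\:U)$, the empirical version of Lemma~\ref{f-rho-violation}. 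A single-function concentration bound for $f_\infty$ on $S_\:U$ gives $\hat V(f_\infty;S_\:U)\le V(f_\infty)+\sqrt{\log(1/\delta)/2m_\:U}\le u+\sqrt{\log(1/\delta)/2m_\:U}$, and the uniform violation bound~\eqref{eqn:violation-gen} of Lemma~\ref{gen-bound} over $\+F$ turns the empirical violation of $\hat f_\rho$ back into a population one, $V(\hat f_\rho)\le \hat V(\hat f_\rho;S_\:U)+\mathfrak{R}_{m_\:U}(\+F)+\sqrt{\log(1/\delta)/2m_\:U}$. Chaining these three inequalities yields exactly $V(\hat f_\rho)\le 1/\rho+u+\mathfrak{R}_{m_\:U}(\+F)+2\sqrt{\log(1/\delta)/2m_\:U}=B$. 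Two points are essential here: one must use the \emph{coarse} uniform bound over all of $\+F$ (one cannot yet invoke the tighter bound over $\+F_B$, since membership in $\+F_B$ is the very thing being proved), and $\+F_B=\{f\in\+F: V(f)\le B\}$ is a deterministic, population-defined subclass, which is what licenses re-running Lemma~\ref{gen-bound} with $\+F_B$ in place of $\+F$ in the next step.

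\emph{Step 2: generalize over $\+F_B$ and compare to $f_0$.} On the event $\hat f_\rho\in\+F_B$, I would begin from $R(\hat f_\rho)\le R(\hat f_\rho)+\rho V(\hat f_\rho)-\rho V(f_\infty)$, valid because $f_\infty$ minimizes $V$ over $\+F$ so $V(\hat f_\rho)\ge V(f_\infty)$; this is exactly where the $-\rho V(f_\infty)$ term of~\eqref{err-bound} comes from, and it is what keeps the ``deterministic part'' of the bound at $R(f_0)+\rho V(f_0)-\rho V(f_\infty)$, which by Proposition~\ref{vio-risk-inequalities} still upper-bounds the population quantity $R(f_\rho)$. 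Next bound $R(\hat f_\rho)+\rho V(\hat f_\rho)$ above by $\hat R(\hat f_\rho;S_\:L)+\rho\hat V(\hat f_\rho;S_\:U)$ using Lemma~\ref{gen-bound} applied to $\+F_B$ (separately for the $\ell^1$ loss on $S_\:L$ and for the violation on $S_\:U$), at the cost of $\mathfrak{R}_{m_\:L}(\+F_B)+\rho\,\mathfrak{R}_{m_\:U}(\+F_B)$ and two $\sqrt{\cdot}$ terms; replace $\hat R(\hat f_\rho;S_\:L)+\rho\hat V(\hat f_\rho;S_\:U)$ by $\hat R(f_0;S_\:L)+\rho\hat V(f_0;S_\:U)$ using optimality of $\hat f_\rho$ over $\+F\ni f_0$; and finally apply single-function concentration for $f_0$ on $S_\:L$ and on $S_\:U$ to pass back to $R(f_0)+\rho V(f_0)$, costing two more $\sqrt{\cdot}$ terms. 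Collecting everything and taking a union bound over the six events (harmlessly inflating $\log(1/\delta)$ to $\log(2/\delta)$ wherever a two-sided bound or internal union is used) gives precisely~\eqref{err-bound}.

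\emph{Main obstacle.} The delicate step is Step~1: one has to identify the correct slack that is ``paid'' ($\hat R(f_\infty;S_\:L)\le 1$, rather than something sharper), and one has to be scrupulous that the class appearing in the final Rademacher terms is the fixed, population-defined $\+F_B$ rather than any data-dependent set, so that the two uniform bounds over $\+F_B$ used in Step~2 are a legitimate instantiation of Lemma~\ref{gen-bound}. Once Step~1 is secured, the remainder is the routine ERM error decomposition together with Hoeffding/Rademacher concentration and a union bound.
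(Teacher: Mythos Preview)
Your proposal is correct and follows essentially the same approach as the paper. Step~1 is identical to the paper's (empirical optimality against $f_\infty$, Hoeffding on $f_\infty$, then the uniform violation bound over $\+F$), and Step~2 is the standard decomposition over the restricted class $\+F_B$. The only cosmetic difference is the comparison point: the paper routes through the population minimizer $f_\rho$ (bounding $\hat{\+L}_\rho(\hat f_\rho)\le\hat{\+L}_\rho(f_\rho)$, applying Hoeffding to $f_\rho$, and then using $\+L_\rho(f_\rho)\le\+L_\rho(f_0)$), whereas you compare $\hat f_\rho$ directly to $f_0$ via empirical optimality and apply Hoeffding to $f_0$ instead; both paths use the same six high-probability events and produce the same constants.
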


\begin{proofsketch}
    First, we show $\hat{f}_\rho$ and $f_\rho$ both lie in the subspace $\+F_{1/\rho + u + B(\delta, m_\:U, \mathcal{F})}$ with high probability since the violation can be well-approximated, according to Lemma \ref{gen-bound}. 
    Then, the gap between the objective $\+L(f_\rho)$ and $\+L(\hat{f}_\rho)$ is controlled by the  Rademacher complexity of $\+F_{1/\rho + u + B(\delta, m_\:U, \mathcal{F})}$. 
    Finally, we use the inequalities established in Lemma \ref{vio-risk-inequalities} to further upper bound the term $\+L(f_\rho)$ using the risk and violation of $f_0$.

    Using the same proof technique, this result can be extended to other choices of loss function as long as: 
    (a) The loss is bounded so that the optimal regularized model has a small violation, as in Lemma \ref{f-rho-violation}. (b) The loss is Lipschitz with the model scores so that a generalization bound associated with the loss holds, as in Lemma \ref{gen-bound}.
\end{proofsketch}

\paragraph{Reducing the generalization gap.}
The bound \eqref{err-bound} contains three parts: the first line is the worst risk that can be achieved by $f_\rho$ as we described in Proposition \ref{vio-risk-inequalities}, the second and the third line is the complexity of the classifiers that have a small violation, and the last line is the errors that are independent of the model.
This bound \eqref{err-bound} is most preferable when a large set of unlabeled data is available so that the approximation errors of violations (i.e., term $B(\delta/2, m_\:U, \mathcal{F})$, ${\mathfrak{R}}_{m_\:U}(\+F_{1/\rho + u + B(\delta/2, m_\:U, \mathcal{F})})$ and $\sqrt{\frac{\log(1/\delta)}{2m_\:U}}$) are all small. Then, the model complexity is mainly described by the term ${\mathfrak{R}}_{m_\:L}(\+F_{1/\rho + u})$, which is the Rademacher complexity of a proper subset of $\+F$.
In this sense, the regularization method reduces the generalization gap by reducing the model complexity of the scoring space.

\paragraph{Tradeoff in regularization.}
In situations where $m_\:U$ is large, the tradeoff parameter $\rho$ balances two quantities: a larger $\rho$ leads to a smaller scoring space $\+F_{1/\rho + u}$, but brings more bias depending on the suboptimality of $f_0$ in violation, measured by $V(f_0)-V(f_\infty)$.
The benefit of regularization is greater if fewer classifiers can achieve a violation that is close to the optimal value $V(f_\infty)$.

We provide the following example to illustrate how the Rademacher complexity can be reduced in linear models.

\begin{example}[Logistic Regression]
\label{rademacher-example}
Consider a linear model for multiclass classification where $\+Y=[c]$ and $f(x,j)=w_j^{\:T} x$ with $\sum_{j=1}^c \|w_j\|_2^2 \le 1$. 
Suppose $x \in \-R^p$ is distributed in the unit sphere $\|x\|_2 \le 1$ with expectation $\-E[x] = \alpha \in \-R^p$ and covariance matrix $\sigma^2I_{p\times p}$. 
Without constraint, the Rademacher complexity is upper bounded as $\mathfrak{R}_m(\+F) \le \sqrt{c/m}$ as in \citet{CKMY16} (Theorem 2).
Now, consider a constraint that removes exactly one label so that $C(x) \equiv [c-1]$.
With regularization, for sufficient small $t<1/(c+2)$, we have the following bound 
\begin{equation}
    \mathfrak{R}_m(\+F_t) 
    \le \frac{1}{2}\left(\sqrt{\frac{c}{m}} + \sqrt{\frac{c-\sigma^2-\|\alpha\|_2^2}{m}}\right)
\end{equation}
which is strictly tighter than the standard bound. Intuitively, if $x$ is concentrated around the origin $\mathbf{0}$, the prediction by any classifier will tend to be a uniform distribution. Therefore, a large bias and variance in $x$ (captured by $\sigma^2+\|\alpha\|_2^2$) help to distinguish models with different levels of violation.
\end{example}

\paragraph{Compare to existing results.}
Previous works mostly consider a zero-one loss for both classification and violation under the assumption that the risk minimizer also achieves zero violation.
Then, one can simply preclude all the classifiers $f\in \+F$ that have nonzero empirical violations on the unlabeled dataset and find the ERM among the remaining classifiers. 
This approach has been theoretically studied in \citet{BalcanBl05, BalcanBl10} for binary classification and \citet{TulabhuRu14} in a similar manner for regression by characterizing the complexity of the reduced set of hypotheses that achieve zero violation.
Conceptually, we can regard this algorithm as a special case of problem \eqref{regularized-objective} when $\rho = \infty$.
Our study, therefore, extends previous works with a soft learning objective to multiclass classification problems. 

\section{Inference with Constraints}
\label{inference}

An \emph{inference algorithm} is a mapping $\+F \times \+X \rightarrow \Delta_{\+Y}$. 
By default, we define it as the softmax inference: $(f,x) \mapsto \-P_f(\cdot|x)$.
When performing \emph{inference with constraints} (or constrained inference), we modify this softmax mapping for the given function $f$ using the additional information of $C$.

In this section, we study the Constrained Conditional Model (CCM) \citep{CRRR08, ChangRaRo12}, a broad family of models that perform inference with constraints.
We show at testing time, whether CCM reduces the risk depends on whether the model's expected violation is larger than the noise rate of the constraint $V_\ora$ (Theorem \ref{change-of-risk}).
In particular, when the constraint is noise-free, CCM always achieves a smaller or equal risk.
Furthermore, we show better risks are achieved if the constrained inference is also performed at training time, and pursuing this optimal risk leads to a learning objective that contrasts with the one used in the regularization approach (Proposition \ref{on-training}).

To make distinguishment, we will refer to a model in the original spaces $\+F$ as a \emph{base model} and refer to an augmented model as a \emph{constrained model}.

\subsection{Constrained Conditional Model}

CCM augments existing scoring functions using a linear combination with the violation function. Precisely, given a vanilla scoring space $\+F$, the scoring space of CCM is defined as follows.

\begin{definition}[Constrained conditional model \citep{CRRR08, ChangRaRo12}] 
    Given a scoring space $\+F$, a constraint $C$ and a fixed tradeoff parameter $\mu \in [0, \infty]$, the scoring space of the Constrained Conditional Model (CCM) is defined as:
    \begin{equation}
    \begin{aligned}
        \+F^{\mu}
        := \left\{ (x,y) \mapsto f(x,y) - \mu v_{C}(x,y) \middle| f\in \+F\right\}
    \end{aligned}
    \end{equation}
    We will also denote 
    \begin{equation}
    \label{CCM}
    \begin{aligned}
        f^\mu(x,y) 
        := f(x,y) - \mu v_{C}(x,y)
    \end{aligned}
    \end{equation}
    to be the augmented scoring function for a given $f\in\+F$. In particular, setting $\mu = \infty$ will assign a score $-\infty$ to any $y \notin C(x)$, which implies $\-P_{f^\infty}(y|x)=0$, namely forcing strictly-constrained inference.
\end{definition}

The tradeoff parameter $\mu$ allows CCM to improve the base model $f$ despite noisy constraints, as we will discuss in detail in the following sections. Otherwise, if the noise rate is large, performing strictly-constrained inference can be harmful because it assigns 0 probability mass to any label $y$ that is outside $C(x)$ and hence has a classification loss $L(x,y_\ora,f^\infty)=1$ at any $x$ where $y_\ora \notin C(x)$.

The learner can choose whether or not to perform the constrained inference either at the training time. This choice leads to the following two different approaches:
\begin{enumerate}
    \item 
    On-training approach: perform constrained inference both at training and testing time, and directly find the ERM over $\+F^\mu$ using labeled data (also known as \emph{(Inference Based Training} in \citep{PRYZ05})
    
    \item 
    Post-training approach: first find the ERM over the vanilla $\+F$ using labeled data, and then perform constrained inference at the testing time (also known as \emph{Learning Plus Inference} in \citep{PRYZ05}). 
\end{enumerate}
For both approaches, the generalization ability of CCM is characterized by the complexity of $\+F^{\mu}$. So, we first point out that CCM does not increase the Rademacher complexity.

\begin{proposition}[Rademacher Complexity of CCM]
    \label{CCM-Rademacher}
        For any fixed $\mu \ge 0$ and $m \in \-N$, we have the following identity:
        \begin{equation}
        \begin{aligned}
            {\mathfrak{R}_m}(\+F^{\mu})
            = {\mathfrak{R}_m}(\+F)
        \end{aligned}
        \end{equation}
\end{proposition}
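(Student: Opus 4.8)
The plan is to work directly from the definition of empirical Rademacher complexity and exploit the single structural fact that the CCM correction term $\mu v_C(x,y)$ carries no dependence on the base classifier $f$. Concretely, I would expand
\[
    \widehat{\mathfrak{R}}_m(\+F^\mu; S)
    = \frac{1}{m}\,\-E_\epsilon\left[
        \sup_{f\in\+F}\sum_{i=1}^m\sum_{y\in\+Y}\epsilon_{i,y}\bigl(f(x_i,y) - \mu v_C(x_i,y)\bigr)
    \right],
\]
and observe that the summand $-\mu\sum_{i,y}\epsilon_{i,y}\,v_C(x_i,y)$ is constant with respect to the optimization variable $f$ — it depends only on $S$, on $\mu$, on the constraint $C$, and on the realization of $\epsilon$. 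Hence it factors out of the supremum: for each fixed $\epsilon$,
\[
    \sup_{f\in\+F}\sum_{i,y}\epsilon_{i,y}\,f^\mu(x_i,y)
    = \left(\sup_{f\in\+F}\sum_{i,y}\epsilon_{i,y}\,f(x_i,y)\right) - \mu\sum_{i,y}\epsilon_{i,y}\,v_C(x_i,y).
\]

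Next I would take expectation over the Rademacher variables. By linearity and the fact that each $\epsilon_{i,y}$ has mean zero,
\[
    \-E_\epsilon\!\left[\mu\sum_{i,y}\epsilon_{i,y}\,v_C(x_i,y)\right]
    = \mu\sum_{i,y} v_C(x_i,y)\,\-E_\epsilon[\epsilon_{i,y}] = 0,
\]
so that $\widehat{\mathfrak{R}}_m(\+F^\mu;S) = \widehat{\mathfrak{R}}_m(\+F;S)$ for every instance set $S$. Taking expectation over $S\sim\-P_X^m$ then gives $\mathfrak{R}_m(\+F^\mu) = \mathfrak{R}_m(\+F)$, as claimed.

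There is essentially no serious obstacle here; the only point deserving a moment's care is pulling the $f$-independent additive term through the supremum, which is legitimate precisely because $v_C(x,y)$ does not depend on $f$ — this is the reason the CCM augmentation cannot inflate the complexity, in contrast to augmentations that reweight the score in an $f$-dependent way. (Implicitly the argument is for finite $\mu$, which is the regime the proposition is stated in; the degenerate case $\mu=\infty$, where scores are $-\infty$ off $C(x)$, is treated separately.)
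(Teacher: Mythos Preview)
Your proposal is correct and matches the paper's proof essentially line for line: both arguments pull the $f$-independent term $-\mu\sum_{i,y}\epsilon_{i,y}v_C(x_i,y)$ out of the supremum and then kill it using $\-E_\epsilon[\epsilon_{i,y}]=0$. The paper phrases this as ``the Rademacher complexity of the singleton $\{(x,y)\mapsto -\mu v(x,y)\}$ is zero'' followed by linearity, but the content is identical to what you wrote.
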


\subsection{Post-training Constrained Inference}

For a given and fixed classifier $f$ (presumably trained with data), how does performing constrained inference impact the model performance?
In this section, we study the change in risk when the learner chooses to augment $f$ as a CCM $f^\mu$ defined in \eqref{CCM}. 
It is most convenient to characterize the risk of a CCM using the cross-entropy loss, although we will also conduct the same analysis for the hinge and $\ell^1$ losses, as we will point out later. 
To start with, for any $f$ and $\mu \in [0, \infty]$, we let 
\begin{equation}
    \Delta^\mu_\ce(f)
    :=R_\ce(f) - R_\ce(f^\mu)
\end{equation}
be the difference in the risk between the base model and the CCM (the larger the better).

\begin{theorem}[Change in cross-entropy risk]
\label{change-of-risk}
    We have:
    \begin{enumerate}[label=(\alph*)]
    \itemsep-0.5em 
        \item 
        For any fixed model $f$, there exists an $\mu_0 > 0$ such that $R_\ce(f^{\mu_0}) < R_\ce(f)$ if and only if 
        \begin{equation}
        \begin{aligned}
            V(f) > V_\ora
        \end{aligned}
        \end{equation}

        \item 
        The change in risk can be lower bounded as
        \begin{equation}
        \label{risk-bound-ce-loss}
        \begin{aligned}
            \Delta^\mu_\ce(f)
            \ge V(f)(1-\e^{-\mu}) - \mu V_\ora
        \end{aligned}
        \end{equation}

        \item 
        In particular, if the constraint is noise-free, we have 
        \begin{equation}
        \begin{aligned}
            \Delta^\infty_\ce(f)
            = V_\ce(f)
        \end{aligned}
        \end{equation}

    \end{enumerate}
\end{theorem}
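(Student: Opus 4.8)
The plan is to reduce all three parts to a single pointwise identity for the per‑example difference in cross-entropy loss, and then read off each claim from it. First I would write the softmax distribution of the CCM explicitly. With the partition function $Z_f(x):=\sum_{y\in\+Y}\e^{f(x,y)}$, the definition $f^\mu(x,y)=f(x,y)-\mu v_C(x,y)$ gives
\[
Z_{f^\mu}(x)=\sum_{y\in C(x)}\e^{f(x,y)}+\e^{-\mu}\sum_{y\notin C(x)}\e^{f(x,y)}=Z_f(x)\bigl(1-v_C(x,f)(1-\e^{-\mu})\bigr),
\]
using $\sum_{y\notin C(x)}\-P_f(y|x)=v_C(x,f)$. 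Since $L_\ce(x,y_\ora,f)=-f(x,y_\ora)+\log Z_f(x)$ and $L_\ce(x,y_\ora,f^\mu)=-f(x,y_\ora)+\mu v_C(x,y_\ora)+\log Z_{f^\mu}(x)$, subtracting yields
\[
L_\ce(x,y_\ora,f)-L_\ce(x,y_\ora,f^\mu)=\underbrace{-\log\bigl(1-v_C(x,f)(1-\e^{-\mu})\bigr)}_{=:g_\mu(x)}-\mu\,v_C(x,y_\ora).
\]
Taking expectations and using $\-E[v_C(x,y_\ora)]=V_\ora$ gives the master formula $\Delta^\mu_\ce(f)=\-E_x[g_\mu(x)]-\mu V_\ora$; everything else is extracting consequences.

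For part (b), apply $-\log(1-z)\ge z$ with $z=v_C(x,f)(1-\e^{-\mu})\in[0,1)$ to get $g_\mu(x)\ge v_C(x,f)(1-\e^{-\mu})$ pointwise, so $\-E_x[g_\mu(x)]\ge V(f)(1-\e^{-\mu})$ and hence $\Delta^\mu_\ce(f)\ge V(f)(1-\e^{-\mu})-\mu V_\ora$. For part (c), the noise-free hypothesis forces $V_\ora=0$ and $v_C(x,y_\ora)=0$ almost surely; taking $\mu=\infty$ (so $1-\e^{-\mu}\to 1$, via monotone convergence, allowing $+\infty$ values) gives $\Delta^\infty_\ce(f)=\-E_x[-\log(1-v_C(x,f))]=\-E_x[-\log\-P_f(C(x)|x)]=V_\ce(f)$.

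For part (a), the ``if'' direction follows directly from (b): when $V(f)>V_\ora$, the lower bound $\phi(\mu):=V(f)(1-\e^{-\mu})-\mu V_\ora$ satisfies $\phi(0)=0$ and $\phi'(0)=V(f)-V_\ora>0$, so $\phi(\mu_0)>0$ for all sufficiently small $\mu_0>0$, which forces $\Delta^{\mu_0}_\ce(f)>0$, i.e.\ $R_\ce(f^{\mu_0})<R_\ce(f)$. For the ``only if'' direction I would prove the contrapositive: if $V(f)\le V_\ora$ then $\Delta^\mu_\ce(f)\le 0$ for every $\mu>0$. This needs the matching pointwise upper bound $g_\mu(x)\le\mu\,v_C(x,f)$. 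Writing $v=v_C(x,f)$ and $b=1-\e^{-\mu}$ (so $\mu=-\log(1-b)$), this is equivalent to $(1-b)^{v}\le 1-bv$, which holds because $v\mapsto(1-b)^{v}$ is convex on $[0,1]$ and therefore lies below its chord, whose value at $v$ is $1-bv$. Taking expectations gives $\-E_x[g_\mu(x)]\le\mu V(f)$, hence $\Delta^\mu_\ce(f)\le\mu(V(f)-V_\ora)\le 0$.

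The main obstacle I anticipate is not any single inequality but keeping the bookkeeping uniformly valid at the boundary: the pointwise identity and both pointwise bounds must remain meaningful in $[-\infty,+\infty]$ when $\mu=\infty$ or $v_C(x,f)=1$, and one must check that $\-E_x[g_\mu(x)]$ is well defined — it is nonnegative, so the expectation exists in $[0,\infty]$, which is enough for all the inequalities, and when it is $+\infty$ the statements are either vacuous or coincide with $V_\ce(f)=+\infty$. Beyond that the proof is the two one-line convexity/concavity estimates plus a monotone-convergence pass to $\mu=\infty$, with (b) logically preceding (a) and (c) being a corollary of the master formula.
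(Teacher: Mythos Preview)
Your proof is correct and takes a genuinely different route from the paper. You derive a closed-form pointwise identity
\[
L_\ce(x,y_\ora,f)-L_\ce(x,y_\ora,f^\mu)=-\log\bigl(1-v_C(x,f)(1-\e^{-\mu})\bigr)-\mu\,v_C(x,y_\ora)
\]
and then extract all three parts via two elementary convexity bounds: $-\log(1-z)\ge z$ for (b), and the chord inequality $(1-b)^v\le 1-bv$ for the converse in (a). The paper instead works differentially: it computes $\tfrac{\dd}{\dd\mu}\Delta^\mu_\ce(f)=V(f^\mu)-V_\ora$, shows this derivative is nonincreasing in $\mu$ (so $\Delta^\mu_\ce$ is concave with $\Delta^0_\ce=0$), and handles (a) by sign of the derivative at $\mu=0$ while obtaining (b) and (c) by integrating a lower bound on $V(f^t)$ (respectively the exact integrand). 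Your approach is arguably cleaner and more self-contained, avoiding calculus and the interchange of derivative and expectation; the paper's approach buys the conceptually useful derivative identity (the marginal gain from tightening $\mu$ is the CCM's current violation minus the oracle's) and the global concavity of $\mu\mapsto\Delta^\mu_\ce(f)$, which it reuses later. The two are of course linked: differentiating your $g_\mu(x)$ recovers exactly the paper's pointwise derivative $v_C(x,f^\mu)$.
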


The first result describes the sufficient and necessary condition for constrained inference to be helpful. 
It requires $f$ to have a larger violation (measured by $\ell^1$ violation) than the true data on average so that it has the potential to be improved. This condition is easier to satisfy when the constraint is less noisy.

The second result further quantifies the risk reduction as an explicit function of $\mu$.
The last result shows that in the noise-free case, the maximum risk reduction is exactly the expected violation measured by cross-entropy. Its consequences will be further discussed in the next section.

\begin{remark}[CCM with alternative losses]
    We present the counterparts of Theorem \ref{change-of-risk} for hinge loss and $\ell^1$ loss in the Appendix \ref{alternative-loss}. 
    The information delivered by those results is consistent with Theorem \ref{change-of-risk} in the sense that (1) whether CCM can reduce the risk depends on the comparison between the violation of the original model and the oracle. 
    (2) the reduction can be described or lower bounded by some measures of the violation.
    
    The drawback of the hinge loss is its non-smoothness due to the discontinuity of the argmax inference. The drawback of the $\ell^1$ loss is that the range of $\mu$ such that $R(f^\mu) \le R(f)$ can be disconnected and difficult to describe. Therefore, we provide weaker results by deriving only sufficient or necessary conditions for CCM to reduce the risks.
\end{remark}

As an application of Theorem \ref{change-of-risk}, we derive a sufficient condition under which CCM achieves smaller risks.
\begin{corollary}[Choice of $\mu$]
    \label{mu-upper-bound}
    Assuming $V(f) \ge V_\ora$, then $R_\ce(f^\mu) \le R_\ce(f)$ if the following condition holds:
    \begin{equation}
    \label{CCM-choice-of-mu}
    \begin{aligned}
        \mu 
        \le W(-\eta/\e^\eta)+\eta
    \end{aligned}
    \end{equation}
    where $\eta := V(f)/V_\ora$ is the relative violation rate and $W$ is the Lambert $W$ function whose value $W(t)$ is defined to be the solution to the equation $w\e^w = t$ of $w$.
\end{corollary}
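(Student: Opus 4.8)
The plan is to bootstrap from the lower bound on the risk reduction furnished by Theorem~\ref{change-of-risk}(b) and reduce the claim to a one-dimensional inequality in $\mu$, which I then solve explicitly using the defining property of the Lambert $W$ function.

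First I would invoke Theorem~\ref{change-of-risk}(b), which gives $\Delta^\mu_\ce(f) \ge V(f)(1-\e^{-\mu}) - \mu V_\ora$, so it suffices to show the right-hand side is nonnegative whenever $\mu \le W(-\eta/\e^\eta)+\eta$. Since $\eta = V(f)/V_\ora$ presupposes $V_\ora>0$ (the case $V_\ora = 0$ is the noise-free situation of Theorem~\ref{change-of-risk}(c), where strictly constrained inference already works and there is nothing to prove), I divide through by $V_\ora$ and reduce the goal to showing $g(\mu) := \eta(1-\e^{-\mu}) - \mu \ge 0$.

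Next I would carry out the elementary calculus of $g$ on $[0,\infty)$. We have $g(0)=0$ and $g'(\mu) = \eta\e^{-\mu} - 1$, which, because $\eta \ge 1$ by hypothesis, is nonnegative on $[0,\ln\eta]$ and strictly negative thereafter; hence $g$ rises from $0$ to a strictly positive maximum at $\mu=\ln\eta$ and then decreases to $-\infty$. Consequently $\{\mu \ge 0 : g(\mu)\ge 0\} = [0,\mu^*]$, where $\mu^*$ is the unique positive root of $g$ (with $\mu^*=0$ in the degenerate case $\eta=1$). It then remains to identify $\mu^*$. Writing $g(\mu^*)=0$ as $\eta - \mu^* = \eta\e^{-\mu^*}$ and substituting $z := \mu^* - \eta$ yields $z\e^z = -\eta\e^{-\eta} = -\eta/\e^\eta$, so $z$ is a value of $W$ at $-\eta/\e^\eta$. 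Since $-\eta/\e^\eta \in [-1/\e,0)$, this equation has (for $\eta>1$) exactly two real solutions: $z=-\eta \le -1$, which merely recovers the trivial root $\mu^*=0$, and $z = W(-\eta/\e^\eta) \in (-1,0)$ on the principal branch, giving the meaningful root $\mu^* = \eta + W(-\eta/\e^\eta)$, which one checks is indeed positive (it exceeds $\ln\eta$, the location of the maximum of $g$). Therefore $\mu \le \eta + W(-\eta/\e^\eta) = \mu^*$ forces $g(\mu)\ge 0$, hence $\Delta^\mu_\ce(f)\ge 0$, i.e.\ $R_\ce(f^\mu) \le R_\ce(f)$.

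I expect the only delicate point to be selecting the correct branch of $W$ — separating the spurious root $z=-\eta$ from the desired root $z = W(-\eta/\e^\eta)$ on the principal branch — along with checking the boundary case $\eta=1$; the monotonicity argument for $g$ and the algebraic substitution are routine.
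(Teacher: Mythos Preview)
Your proposal is correct and follows essentially the same route as the paper: both invoke Theorem~\ref{change-of-risk}(b), divide by $V_\ora$ to reduce to the scalar inequality $\eta(1-\e^{-\mu}) - \mu \ge 0$, and then solve for $\mu$ via the Lambert $W$ function. The only difference is presentational: the paper simply cites the closed-form solution of $u \le a + b\e^{cu}$ and substitutes $a=\eta$, $b=-\eta$, $c=-1$, whereas you derive the root by hand through the calculus of $g$ and the substitution $z=\mu^*-\eta$, and you make the branch selection explicit where the paper leaves it implicit.
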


The RHS of \eqref{CCM-choice-of-mu} increases with $\eta$ and vanishes as $\eta \rightarrow 1$.
In particular, when the constraint is noise-free, one should encourage strictly-constrained inference and set $\mu = \infty$. We also provide a plot of the RHS in the proof in the appendix.

\subsection{On-training Constrained Inference}
\label{section:on-training}

In this subsection, we study the on-training approach where we perform constrained inference both at the training and testing time. We use the results we established in the last subsection to describe the learning objective of the on-training approach, and argue that it achieves better risks than the post-training approach. Based on this, we further show that minimizing the cross entropy over CCM encourages a large violation of the base model, which contrasts the learning objective \eqref{regularized-objective-ce} that is used in regularization.

We provide a simplified analysis for the noise-free setting where we choose $\mu = \infty$ and perform strictly-constrained inference.
Then, the on-training approach aims to find the optimal (in terms of cross entropy) base model as follows:
\begin{equation}
\label{on-training-def}
\begin{aligned}
    \on :=
    \argmin_{f \in \+F} R_\ce(f^\infty)
\end{aligned}
\end{equation}
(recall $f^\infty$ means performing strictly-constrained inference with $f$) We characterize the behavior of $\on$ with the following results, which are direct corollaries of Theorem \ref{change-of-risk}.

\begin{proposition}[Learning Optimal CCM; Post-training vs On-training]
\label{on-training}
Assuming $C$ is noise-free, we can reformulate the learning objective \eqref{on-training-def} as
    \begin{equation}
    \label{on-training-objective-ce}
        \begin{aligned}
            \on
            = \argmin_ {f\in\+F} R_\ce(f) - V_\ce(f)
        \end{aligned}
    \end{equation}
\end{proposition}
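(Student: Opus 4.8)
The plan is to obtain the reformulation directly from Theorem~\ref{change-of-risk}(c), using the fact that the identity there holds \emph{pointwise} in $f$ rather than merely at an optimizer. Concretely, I would argue that for every $f \in \+F$,
\begin{equation}
    R_\ce(f^\infty) = R_\ce(f) - V_\ce(f),
\end{equation}
so the objectives of the two minimization problems \eqref{on-training-def} and \eqref{on-training-objective-ce} agree as functions on $\+F$; consequently they have the same set of minimizers and the proposition follows at once.

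To keep the argument self-contained I would first re-establish this pointwise identity. Since $C$ is noise-free we have $y_\ora(x) \in C(x)$ for $\-P$-almost every $x$, so strictly-constrained inference merely renormalizes the softmax distribution onto $C(x)$:
\begin{equation}
    \-P_{f^\infty}(y_\ora(x)|x) = \frac{\-P_f(y_\ora(x)|x)}{\-P_f(C(x)|x)}
\end{equation}
for $\-P$-almost every $x$. Taking negative logarithms and recalling $v_\ce(x,f) = -\log \-P_f(C(x)|x)$ gives $-\log \-P_{f^\infty}(y_\ora|x) = -\log \-P_f(y_\ora|x) - v_\ce(x,f)$, and integrating against $\-P$ yields the displayed identity --- which is precisely the statement $\Delta^\infty_\ce(f) = V_\ce(f)$ of Theorem~\ref{change-of-risk}(c). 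Substituting into \eqref{on-training-def} then gives $\on = \argmin_{f\in\+F} R_\ce(f^\infty) = \argmin_{f\in\+F} R_\ce(f) - V_\ce(f)$, as claimed.

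I do not expect a genuine obstacle here: the analytic content is entirely carried by Theorem~\ref{change-of-risk}(c), and the proposition is essentially a restatement of the on-training objective. The two points worth a sentence of care are (i) that ``noise-free'' is an almost-sure condition, which is harmless because $R_\ce$ and $V_\ce$ are expectations and hence unaffected by modifying $y_\ora$ on a $\-P$-null set, and (ii) that the minimizer need not be unique, which is why the clean statement is that the two objectives are equal as functions of $f$ --- so \eqref{on-training-def} and \eqref{on-training-objective-ce} pick out the same family of optimal base models, consistent with the paper's convention on non-unique minimizers. It is also worth remarking why the noise-free, $\mu=\infty$ regime is singled out: for finite $\mu$ the analogue of part (c) weakens to the lower bound \eqref{risk-bound-ce-loss} on $\Delta^\mu_\ce(f)$, which does not rearrange into a closed-form surrogate, so no comparably clean reformulation of the on-training objective is available there.
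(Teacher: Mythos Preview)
Your proposal is correct and follows essentially the same approach as the paper: the paper's proof is the single line ``This claim follows from the fact that $R_\ce(f^\infty)=R_\ce(f)-V_\ce(f)$ from [Theorem]~\ref{change-of-risk}(c),'' which is exactly the pointwise identity you invoke (and also re-derive directly from the renormalization formula). Your additional remarks on almost-sure noise-freeness and non-uniqueness of minimizers are sound but go beyond what the paper bothers to spell out.
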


\paragraph{A fundamental difference.}
Surprisingly, the reformulated learning objective \eqref{on-training-objective-ce} is opposite to the surrogate regularized objective defined in \eqref{regularized-objective-ce} in their attitudes towards violations. This contrast suggests a fundamental difference between regularization and constrained inference: the regularization method views violation as a bad thing and it precludes classifiers with substantial violations. But constrained inference corrects a model from its violation, so a large violation means a great potential to be improved.

\paragraph{On-training vs post-training.}
Loosely speaking, this result also suggests that in general, the best constrained model is not the constrained best model. To be more precise, suppose we perform post-training constrained inference for the cross-entropy risk minimizer in the vanilla model, i.e., $\post := \argmin_{f\in\+F} R_\ce (f)$.
Then, we can reformulate the definition of $\post$ as
\begin{equation}
\begin{aligned}
    \post 
    := \argmin_{f\in\+F} \underbrace{(R_\ce(f) - V_\ce(f))}_{\text{objective in \eqref{on-training-objective-ce}, post-training risk}} + V_\ce(f)
\end{aligned}
\end{equation}
which can be regarded as a ``regularized'' version of \eqref{on-training-objective-ce}. Therefore, similar to Proposition \ref{vio-risk-inequalities}, we can argue that the risk minimizer $\post$ over $\+F$, as a base model of CCM, contains a bias towards a higher risk than the on-training method's as follows:
\begin{equation}
\label{post-vs-on}
\begin{aligned}
    R_\ce(\on^\infty) 
    \le R_\ce(\post^\infty)
    \le R_\ce(\on) - \min_{f\in\+F} V_\ce(f)
\end{aligned}
\end{equation}
The proof is included in the proof of Proposition \ref{on-training}.

\paragraph{Computational considerations.} 
In practical structured prediction problems where the output is sequential or graphical, performing constrained inference during training time is typically expensive due to the complexity of the constraints. For example, as pointed out by \citet{pmlr-v80-xu18h}, when the constraint is defined by a logical expression over several output variables, computing the probability of constraint being satisfied corresponds to the problem of weighted model counting (WMC)  and is \#P-complete \citep{Roth96}.
Therefore, to implement the on-training approach in practice, one can alternatively use approximate inference to ensure tractability.
For example, strictly constrained inference, formulated as Integer Linear Programming \citep{RothYi04}, can be further relaxed as Linear Programming \citep{martins-etal-2009-concise}.
Another example is \emph{amortized inference} \citep{CUKR15}, which accelerates the convergence to the optimal model while only performing exact inference in every $\tau>1$ iterations. 

\paragraph{Compare to existing results.}
There has been limited theoretical work discussing the impact of performing constrained inference. The most related one is \citet{PRYZ05}, which derives VC-style generalization bounds for linear structured models to argue that (1) performing strictly constrained inference in a post-training manner (\emph{Learning Plus Inference} in the paper) improves the model performance and (2) the on-training approach (\emph{Inference Based Training} in the paper) further reduces the error in the long run. Our approach directly analyses the classification risk and extends the comparison to noisy constraints and soft-constrained inference with CCM.
\section{Regularization with Constrained Inference}
\label{combo}

\begin{figure}[h]
    \centering
    \includegraphics[width=0.4\textwidth]{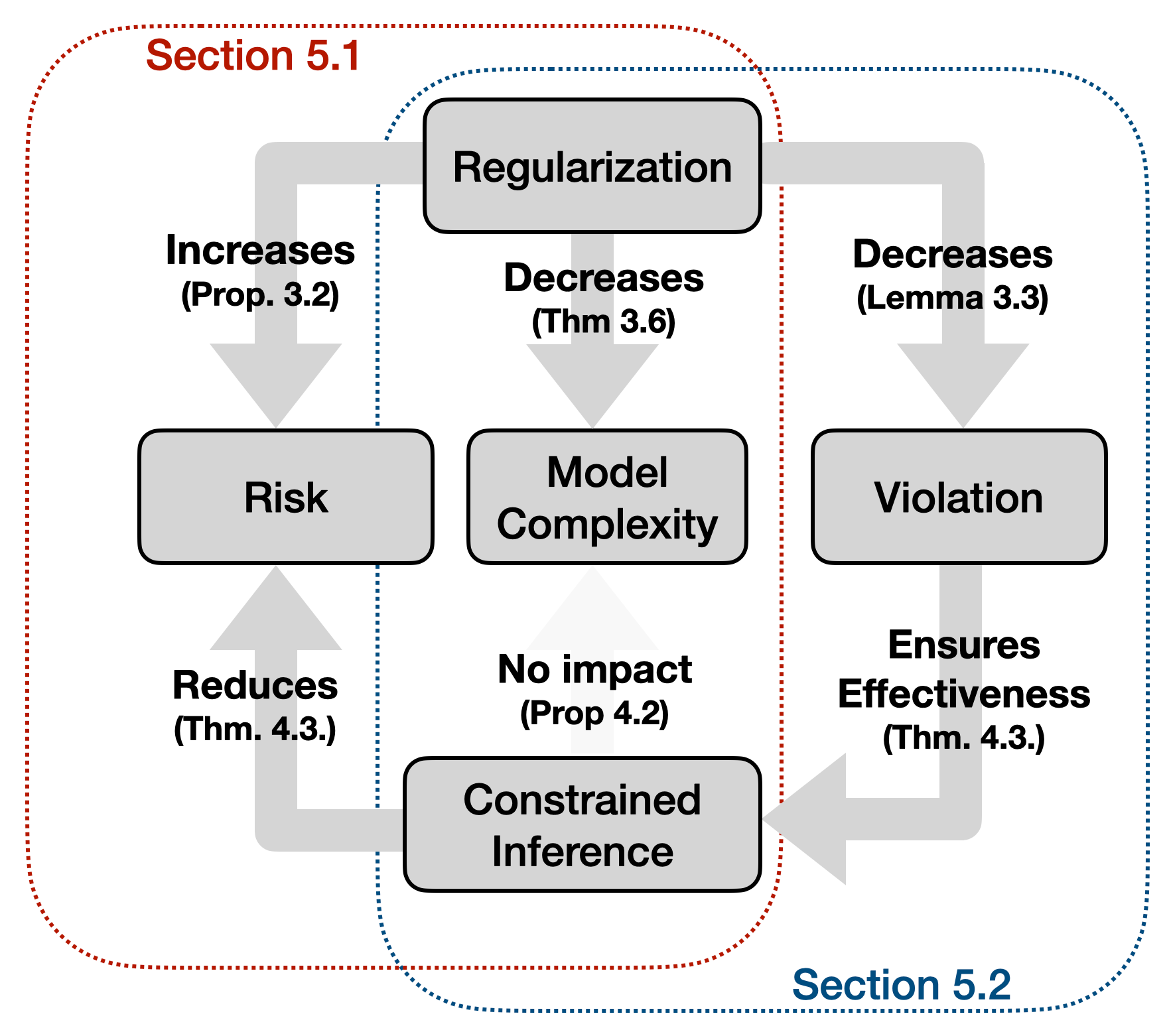}
    \caption{A summary of the established results, as motivations to the problems of this section. In Section \ref{combo-help}, we describe how CCM can reduce the additional risk introduced by regularization. In section \ref{combo-not-help}, we claim that the decrease of violation with regularization can make post-training constrained inference unnecessary.}
    \label{combo-visual}
\end{figure}

We have seen that regularization and constrained inference have different impacts on the generalization gap and the risk.
On one hand, CCM has an equal Rademacher complexity (Proposition \ref{CCM-Rademacher}) as the original model $\mathfrak{R}(\+F)$, which can be reduced by regularization. So, performing regularized algorithm to CCM also reduces the generalization gap.
On the other hand, their impacts on the risks are contradicting, as summarized in figure \ref{combo-visual}. 
In this section, we aim to describe how these impacts can interact with each other by applying our established results to explore the usage of these two methods together.

We show both positive and negative results for the combination. On one hand, we propose sufficient conditions under which the bias introduced by regularization can be compensated by performing constrained inference (Proposition \ref{combo-help}). 
On the other hand, we study if post-training constrained inference can reduce the risk of the optimal classifier $f_\rho$. We show with a noisy constraint, choosing a large value of $\rho$ in the regularized objective \eqref{regularized-objective} will make CCM incapable to reduce the risk (Proposition \ref{combo-not-help}).

\subsection{CCM Compensates for Regularization Bias} 

As the red part of Figure \ref{combo-visual} summarizes, we have shown that the regularization and constrained inference have contradicting influences on the risk. Moreover, the regularization bias is controlled by the violation of the risk minimizer (Proposition \ref{vio-risk-inequalities}), which can be reduced by constrained inference. This suggests the possibility for CCM to reduce the additional risk introduced by regularization.

We formally describe this phenomenon by considering the following combination: an on-training approach that aims to find the minimizer of the following regularized surrogate objective over the CCM $\+F^\mu$:
\begin{equation}
    \label{combo-objective}
    \begin{aligned}
        f_\star^\mu
        := \argmin_{g\in\+F^\mu} R_\ce(g) + \rho V_\ce(g)
    \end{aligned}
\end{equation}
Recall that $R_\ce(\post)$ is the minimum cross-entropy risk that can be achieved in $\+F$.
We show that unlike the vanilla regularized objective \eqref{regularized-objective}, it is possible for this algorithm to achieve a smaller risk than $R_\ce(\post)$ as follows. 

\begin{theorem}[Regularization with on-training constrained inference]
    \label{combo-help}
    If 
    CCM improves $\post$ so that $\Delta^\mu_\ce(\post)> 0$, 
    then letting
    \begin{equation}
        \label{sufficient-combo-help}
        \begin{aligned}
            \rho 
            < \frac{V_\ce(\post)-\mu V_\ora}{V_\ce(\post^\mu)} - 1
        \end{aligned}
    \end{equation}
    will imply $R_\ce(f_\star^\mu) < R_\ce(\post)$.
\end{theorem}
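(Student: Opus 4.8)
The plan is to run the standard ``optimality against a fixed competitor'' argument for $f_\star^\mu$ inside $\+F^\mu$, and then convert the resulting inequality into a bound on $R_\ce$ alone using the exact decomposition of the CCM risk gap $\Delta^\mu_\ce$ that underlies Theorem \ref{change-of-risk}.

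\emph{Step 1: pick a competitor.} Since $\post\in\+F$, its augmentation $\post^\mu$ lies in $\+F^\mu$, so by the definition \eqref{combo-objective} of $f_\star^\mu$,
\[
R_\ce(f_\star^\mu) + \rho\,V_\ce(f_\star^\mu) \;\le\; R_\ce(\post^\mu) + \rho\,V_\ce(\post^\mu),
\]
and since $V_\ce \ge 0$ this already gives $R_\ce(f_\star^\mu) \le R_\ce(\post^\mu) + \rho\,V_\ce(\post^\mu)$.

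\emph{Step 2: unpack $R_\ce(\post^\mu)$.} I would use $R_\ce(\post^\mu) = R_\ce(\post) - \Delta^\mu_\ce(\post)$ together with the identity $\Delta^\mu_\ce(f) = V_\ce(f) - V_\ce(f^\mu) - \mu V_\ora$ (valid for $f\in\+F$, $\mu\in[0,\infty)$; the $\mu=\infty$ noise-free case is Theorem \ref{change-of-risk}(c)). This identity is the partition-function computation behind Theorem \ref{change-of-risk}: writing the softmax normalizers of $f$ and $f^\mu$ and using $v_C(x,y)=\-1\{y\notin C(x)\}$, the normalizer of $f^\mu$ at $x$ equals that of $f$ times $1-(1-\e^{-\mu})v_C(x,f)$, and this common factor cancels between $R_\ce(f)-R_\ce(f^\mu)$ and $V_\ce(f^\mu)-V_\ce(f)$, leaving only $\-E[\mu\,v_C(x,y_\ora)] = \mu V_\ora$. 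Substituting and collecting terms,
\[
R_\ce(f_\star^\mu) \;\le\; R_\ce(\post) - V_\ce(\post) + \mu V_\ora + (1+\rho)\,V_\ce(\post^\mu).
\]

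\emph{Step 3: rearrange.} The right-hand side is strictly below $R_\ce(\post)$ iff $(1+\rho)\,V_\ce(\post^\mu) < V_\ce(\post) - \mu V_\ora$. The hypothesis $\Delta^\mu_\ce(\post) > 0$ is exactly $V_\ce(\post) - \mu V_\ora > V_\ce(\post^\mu) \ge 0$, so that right-hand side is positive; when $V_\ce(\post^\mu) > 0$ the inequality is equivalent to $\rho < \frac{V_\ce(\post) - \mu V_\ora}{V_\ce(\post^\mu)} - 1$, which is precisely condition \eqref{sufficient-combo-help}, and when $V_\ce(\post^\mu) = 0$ the right-hand side of \eqref{sufficient-combo-help} is $+\infty$ and the conclusion holds for every $\rho \ge 0$.

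The main ``obstacle'' is really just recalling the identity for $\Delta^\mu_\ce$; everything else is one line of optimality plus algebra. If one preferred a self-contained argument, the only delicate point would be the bookkeeping of the softmax normalizer entering $V_\ce(\post^\mu)$, but that computation is already present (implicitly) in the proof of Theorem \ref{change-of-risk}. The conceptual takeaway the plan makes transparent: the regularizer $\rho\,V_\ce$ eats into the head-room $\Delta^\mu_\ce(\post)$ that constrained inference opens up, so a net gain over $R_\ce(\post)$ survives precisely while $\rho$ stays below that head-room normalized by the residual violation $V_\ce(\post^\mu)$.
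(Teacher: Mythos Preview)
Your proposal is correct and follows essentially the same route as the paper: compare $f_\star^\mu$ against the competitor $\post^\mu$ via optimality, drop the nonnegative $V_\ce(f_\star^\mu)$ term, invoke the identity $\Delta^\mu_\ce(f)=V_\ce(f)-V_\ce(f^\mu)-\mu V_\ora$, and rearrange. The only cosmetic differences are that the paper derives the identity by integrating the derivative computed in the proof of Theorem~\ref{change-of-risk} (your partition-function cancellation gives the same identity more directly), and that you explicitly dispose of the edge case $V_\ce(\post^\mu)=0$, which the paper leaves implicit.
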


This result shows a small choice of $\rho$ allows the regularized optimizer $f_\star^\mu$ to achieve better cross-entropy.
A less noisy constraint allows more choices of $\rho$ to make this happen.
In particular, when the constraint is noise-free, since $V_\ce(\post^\mu) \rightarrow 0$ as $\mu \rightarrow \infty$, driving $\mu$ to $\infty$ will make $R(f_\star^\mu) < R(\post)$ for all $\rho > 0$. 
As a cost, regularization will be less effective in reducing the Rademacher complexity with a large value of $\mu$. In the extreme case, all the classifiers in $\+F^\infty$ make zero violation, and hence cannot be distinguished by the regularization objective.

\subsection{Post-regularized-training Constrained Inference}

Finally, as the blue part of Figure \ref{combo-visual} summarizes, we have shown that post-training inference is beneficial only if the average violation of $f$ is larger than $V_\ora$ (Theorem \ref{change-of-risk}). However, the minimizer of the regularized objective $f_\rho$ tends to have a small violation (Proposition \ref{f-rho-violation}) scaled with $1/\rho$.
Therefore, it is possible that choosing a large value of $\rho$ will make post-training incapable to reduce the risk with a noisy constraint.
Formally, assuming a model is already trained with the vanilla regularized $\ell^1$ objective as in \eqref{regularized-objective}, we have the following holds.

\begin{theorem}[When post-training inference is not helpful for regularized model]
    \label{combo-not-help}
    Recall $V(f_\infty)$ is the minimal expected violation that can be achieved by $\+F$. If $V_\ora \ge V(f_\infty)$ and
    \begin{equation}
    \label{combo-not-help-condition}
    \begin{aligned}
        \rho 
        \ge \frac{1}{V_\ora - V(f_\infty)}
    \end{aligned}
    \end{equation}
   then the minimizer $f_\rho$ of the regularized objective \eqref{regularized-objective} will \emph{not} be improved by post-training constrained inference for any $\mu \in (0, \infty]$ in the sense that $R_\ce(f_\rho) \le R_\ce((f_\rho)^\mu)$.
\end{theorem}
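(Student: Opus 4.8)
The plan is to reduce the statement to the single scalar inequality $V(f_\rho) \le V_\ora$ and then read off the conclusion from the characterization of when CCM helps, namely Theorem \ref{change-of-risk}(a).

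\emph{Step 1: bound the violation of $f_\rho$.} Since $f_\rho$ minimizes $R(f) + \rho V(f)$ over $\+F$, comparing with $f_\infty$ and using $0 \le R(\cdot) \le 1$ for the $\ell^1$ risk gives $\rho V(f_\rho) \le R(f_\infty) + \rho V(f_\infty) - R(f_\rho) \le 1 + \rho V(f_\infty)$, i.e. $V(f_\rho) \le 1/\rho + V(f_\infty)$ — this is exactly Lemma \ref{f-rho-violation} applied with $u = V(f_\infty)$ (legitimate since the hypothesis $\rho \ge 1/(V_\ora - V(f_\infty))$ forces $\rho > 0$ and $V_\ora > V(f_\infty) \ge 0$). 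The hypothesis on $\rho$ then yields $1/\rho \le V_\ora - V(f_\infty)$, hence $V(f_\rho) \le 1/\rho + V(f_\infty) \le V_\ora$.

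\emph{Step 2: conclude via Theorem \ref{change-of-risk}(a).} With $V(f_\rho) \le V_\ora$ in hand, the contrapositive of the ``if and only if'' in Theorem \ref{change-of-risk}(a) says there is no finite $\mu_0 > 0$ with $R_\ce((f_\rho)^{\mu_0}) < R_\ce(f_\rho)$; equivalently $R_\ce(f_\rho) \le R_\ce((f_\rho)^\mu)$ for every finite $\mu \in (0,\infty)$. For the boundary case $\mu = \infty$ the claim is immediate in the genuinely noisy regime the theorem lives in: the condition on $\rho$ forces $V_\ora > V(f_\infty) \ge 0$, so on the event $\{y_\ora \notin C(x)\}$, which has positive probability, the strictly constrained predictor $(f_\rho)^\infty$ places zero mass on $y_\ora$, whence $R_\ce((f_\rho)^\infty) = +\infty \ge R_\ce(f_\rho)$ (the latter finite because softmax inference assigns strictly positive probabilities to all labels).

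\emph{Main obstacle.} There is no substantial obstacle: the content is in chaining Lemma \ref{f-rho-violation} with Theorem \ref{change-of-risk}(a), not in new estimates. The only two points needing care are (i) invoking the equivalence in Theorem \ref{change-of-risk}(a) in the correct direction — deducing the nonexistence of an improving $\mu$ from $V(f_\rho) \le V_\ora$, rather than the converse — and (ii) handling $\mu = \infty$ separately, since Theorem \ref{change-of-risk}(a) is phrased for finite $\mu_0$; as noted, that case follows from the blow-up of the cross-entropy risk when a noisy constraint is strictly enforced.
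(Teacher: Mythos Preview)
Your proposal is correct and follows essentially the same route as the paper: apply Lemma \ref{f-rho-violation} with $u = V(f_\infty)$ to get $V(f_\rho) \le 1/\rho + V(f_\infty) \le V_\ora$, then invoke the contrapositive of Theorem \ref{change-of-risk}(a). Your explicit treatment of the boundary case $\mu = \infty$ is an extra bit of care the paper's proof omits; it is harmless and indeed tidier, since Theorem \ref{change-of-risk}(a) is proved via a derivative argument at finite $\mu$, and your observation that $V_\ora > 0$ forces $R_\ce((f_\rho)^\infty) = +\infty$ closes that case cleanly.
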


The RHS of \eqref{combo-not-help-condition} shrinks with a larger noise rate $V_\ora$ and smaller $V(f_\infty)$. Intuitively, a more noisy constraint is less helpful (Theorem \ref{change-of-risk}), while a small value of $V(f_\infty)$ allows $f_\rho$ to violate less (Proposition \ref{f-rho-violation}) and hence gains fewer benefits from constrained inference (Theorem \ref{change-of-risk}).
As a consequence, with a noisy constraint, choosing a large $\rho$ in the regularized objective will make post-training constrained inference unnecessary or even harmful.

\section{Related Works}

\paragraph{Regularization with constraints.}
In the context of structured prediction, the Posterior Regularization (PR) framework \citep{GGGT10} proposed to regularize the log-likelihood by adding a distance of the probabilistic prediction to the constrained subspace of distributions.
The CoDL algorithm \citep{ChangRaRo07, ChangRaRo12} is a semi-supervised algorithm that repetitively assigns constrained pseudo-labels to the unlabeled dataset and uses pseudo-labels to retrain the model. 
CoDL and PR are further unified in \citet{SamdaniChRo12} as special cases of a parameterized EM algorithm.
More recent works have proposed injecting logical constraints into deep models by augmenting the training objective with explicitly defined violation functions, such as the semantic loss \citep{pmlr-v80-xu18h}, the DL2 loss \citep{FBDGZV19} and the inconsistency loss \citep{Li2019ALF}, which motivate our theoretical formulation in \eqref{regularized-objective}.

\paragraph{Inference with constraints.}
The idea of injecting prior knowledge directly into a predictive model dates back to \citet{RothYi04}, which formulates the problem of inference with hard constraints as Integer Linear Programming (ILP).
The idea of constrained inference has been followed and developed by NLP researchers and empirically shown to be effective in various problems such as summarization \citep{ClarkeLa08}, temporal reasoning \citep{NingWuRo18}, semantic parsing \citep{scholak-etal-2021-picard} and text generation \citep{lu-etal-2022-neurologic}.
\citet{CRRR08, ChangRaRo12} further defines the CCM to incorporate soft constraints into linear models.
Another related work is \citet{ENRIQUESUCAR201414}, which uses Bayesian networks to model the label correlations and define an order to the labels.
The order information is then taken as extended features at inference time.
Theoretically, \citet{PRYZ05} provides a comparison between the on-training and post-training constrained inference using VC-style error bounds.

\paragraph{Semi-supervised learning theory.} 
Several theoretical semi-supervised learning frameworks such as \citet{BalcanBl05, BalcanBl10} and \citet{TulabhuRu14} illustrate how hard constraints on the hypothesis space could reduce the generalization error. A detailed comparison can be seen in the discussion at the end of Section \ref{learning}.

\paragraph{Learning with partial labels.}
The problem of learning with constraints is closely related to the problem of learning from partial labels (also known as superset labels) \citep{cour2011,learnability-pll, proper-losses-pll, structured-prediction-pll} where each instance $x$ in the dataset is assigned with a partial label $s$ which also takes value in $ 2^\+Y$. 
The difference is that the constraint mapping itself is known to the learner and hence can be encoded in the inference algorithm directly, for example, via the CCM. Another difference is that the partial labels are typically more informative and can guarantee learnability alone \citep{learnability-pll, WangNiRo20}. In contrast, the constraints that appear in practice typically provide only side information and need to be used with gold labels together.
\section{Conclusion and Future Works}

In this paper, we presented a theoretical study of two methods to encode label constraints into a learning system: regularization and constrained inference. 
We compared these two approaches by quantifying their impact on the optimal risk as well as the generalization error. 
Our study revealed that the success of these two approaches replies on different data assumptions:
the regularization method requires the optimal classifier in the model to have a small violation while constrained inference requires the true data to have a small violation.
We further elucidated the detrimental consequences that arise when these assumptions fail to hold.
Finally, we demonstrate how their impacts on the model can interact when used together.

We have focused on multiclass classification, aiming to provide a starting point for understanding the different mechanisms of the two methods. For future work, we will extend the discussion to structured prediction problems where complex constraints are naturally defined. In particular, while the presence of constraints can improve the model performance, it also suggests a strong dependency inside the structure, which may hurt the generalization performance, as pointed out by \citet{LondonHuGe16}.

\section*{Acknowledgements}

This work was partially supported by Contract FA8750-19-2-0201 with the US Defense Advanced Research Projects Agency (DARPA). Approved for Public Release, Distribution Unlimited. The views expressed are those of the authors and do not reflect the official policy or position of the Department of Defense or the U.S. Government.

This work was also partially sponsored by the Army Research Office and was accomplished under Grant Number W911NF-20-1-0080. The views and conclusions contained in this document are those of the authors and should not be interpreted as representing the official policies, either expressed or implied, of the Army Research Office or the U.S. Government. The U.S. Government is authorized to reproduce and distribute reprints for Government purposes notwithstanding any copyright notation herein.

This work was also partially funded by ONR Contract N00014-19-1-2620. 

\bibliography{ccg, cited, new}
\bibliographystyle{icml2023}

\newpage
\appendix 
\onecolumn

\part*{Appendix}

\section{Details on Loss Function}
\label{loss}

The $\ell^1$ loss is a smoothed alternative to the zero-one loss and has been used in the theoretical analysis for the generalization error, see, for example, in \citet{LondonHuGe16} (Section 6.2). It can be related to other common loss functions as follows.

\paragraph{As distances on the probability simplex.}
Let $e_y \in \-R^{|\+Y|}$ be a one-hot vector with the $y^\mathrm{th}$ coordinate be $1$ and all others be $0$. We then have that
$$
    L(x,y_\ora,f) 
    := 1 - \-P_f(y_\ora|x) 
    = \frac{1}{2}\|e_{y_\ora} - \-P_f\|_1
$$
Moreover, since our label space $\+Y$ is of finite cardinality, we further have that $\frac{1}{2}\|e_{y_\ora} - \-P_f\|_1 = \operatorname{TV}(e_{y_\ora}, \-P_f)$, the total variation distance.

\paragraph{Relation to zero-one loss.}
By introducing a temperature parameter $t \in \-R_{\ge 0}$ to the softmax function, it is well known that $\lim_{t \rightarrow \infty} \softmax(tu) = \argmax(u)$ for a vector $u$. This implies 
$$
    \lim_{t \rightarrow \infty} L(x,y_\ora,tf) 
    = 1 - \-1\{\argmax_{y\in\+Y}f(x,y) = y_\ora\}
    = \-1\{\argmax_{y\in\+Y}f(x,y) \ne y_\ora\}
$$
which is the zero-one loss.
Since performing softmax inference with temperature $t$ can be equivalently regarded as performing softmax inference for the scoring space $t\+F$, for the simplicity of our presentation, we omit the temperature parameter in the softmax inference.

\paragraph{Relation to cross-entropy.}
The total variation distance to a one-hot probability can be lower bounded by cross-entropy due to Pinsker's inequality. More directly, in our case, we have $1-p \le -\log(p)$ for any $p \in [0,1]$ from basic inequality. This implies $L(x,y,f) \le L_\ce(x,y,f)$.

In conclusion, the $\ell^1$ loss is a $\ell^1$ and total variation distance on the probability space, is a smoothed version of the zero-one loss, and is upper bounded by cross-entropy. It is differentiable and bounded so that we can derive generalization bounds with Rademacher complexity. Another reason that we are interested in softmax inference will be clearer in the discussion for constrained inference, where in Theorem \ref{change-of-risk}, \ref{change-of-risk-hinge} and \ref{change-of-risk-l1}, the change of expected cross entropy and $\ell^1$ loss can be lower bounded by a smooth function. But with the argmax inference, the risk is in general not continuous and needs to be assumed to be Lipschitz to obtain similar results.

\section{Proofs from Section 3}

\subsection{Proof of Proposition \ref{vio-risk-inequalities}}

The first inequality is straightforward. For the second inequality, by definition \eqref{regularized-objective} we have 
\[
\begin{aligned}
    R(f_\rho) + \rho V(f_\rho) 
    \le R(f_0) + \rho V(f_0)
\end{aligned}
\]
and 
\[
    V(f_\rho) \ge V(f_\infty)
    .    
\]
Combining the two above inequalities yields
\[
\begin{aligned}
    R(f_\rho) + \rho V(f_\infty)
    \le R(f_0) + \rho V(f_0)
    .
\end{aligned}
\]
The desired inequality follows by rearranging these terms. This argument also holds if we replace the expectations with empirical estimates.

To see how the RHS bound can be reached, consider the following scoring space that contains two classifiers, $f_0$ and $f_\infty$, and an instance space $\+X$ that only contains one point $x$. Let $C(x) = \{y_\ora,y'\}$. Let $f_0$ be such that $\-P_{f_0}(y_\ora)=a\in(0,1)$ and $\-P_{f_0}(y')=b$. Let $f_\infty$ be such that $\-P_{f_\infty}(y_\ora)=a-\epsilon_1$ and $\-P_{f_\infty}(y')=b+\epsilon_2$ so that $\epsilon_1 < \rho \epsilon_2$. Then 
\begin{equation}
    R(f_\infty) + \rho V(f_\infty) 
    \le 1 - (a - \epsilon_1) + \rho (b-\epsilon_2)
    < 1-a + \rho b
    = R(f_0) + \rho V(f_0) 
\end{equation}
which means $f_\infty$ will be preferred to $f_0$ by the regularized objective.

\subsection{Proof of Lemma \ref{f-rho-violation}}
By definitions, we have
\begin{equation}
    \begin{aligned}
        \rho {V}({f}_\rho) 
        & \le {R}({f}_\rho) +  \rho {V}({f}_\rho) \\ 
        & \le {R}(f_\infty) +  \rho {V}(f_\infty) \\ 
        & \le 1 +  \rho {V}(f_\infty) \\ 
        & \le 1 + \rho u
    \end{aligned}
\end{equation}
Therefore, we have that $V(f_\rho) \le u + 1/\rho$.

\subsection{Proof of Lemma \ref{gen-bound}}

To prove this theorem, we need the following lemmas. The first one is a contraction inequality established in \citep{CKMY16}.

\begin{lemma}[Lemma 5 from \citep{CKMY16}]
\label{contraction}
    Let $\+H$ be a set of functions mapping $\+X$ to $\-R^N$. Suppose $\Phi_i$ is $\mu_i$-Lipschtz with the 2-norm, i.e.,
    \begin{equation}
    \begin{aligned}
        |\Phi_i(v') - \Phi_i(v)| 
        \le \mu_i \|v'-v\|_2 
        \quad \forall v,v'\in\-R^N
    \end{aligned}
    \end{equation}
    Then for any set of $m$ points $x_1,\dots, x_m \in \+X$, the following inequality holds
    \begin{equation}
    \begin{aligned}
        \frac{1}{m} \-E_\sigma \left[
            \sup_{h \in \+H} \sum_{i=1}^m \sigma_i \Phi_i(h(x_i)) 
            \right]
            \le \frac{\sqrt{2}}{m} \-E_\epsilon \left[
                \sup_{h\in\+H} \sum_{i=1}^m \sum_{j=1}^N \epsilon_{ij} \mu_i h_j(x_i)
        \right]
    \end{aligned}
    \end{equation}
    where $\sigma_i$s and $\epsilon_{ij}$s are independent Rademacher variables uniformly distributed over $\{-1,+1\}$.
\end{lemma}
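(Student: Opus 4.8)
The plan is to reproduce the vector-contraction argument of \citet{CKMY16} (itself a form of Maurer's contraction inequality). The proof has two ingredients: a single-coordinate contraction step, and an iterated \emph{peeling} over the index $i$ that leverages the freedom to let the ``offset'' function be arbitrary. The quantitatively delicate part — where the constant $\sqrt2$ really comes from — is a sharp Khintchine inequality, and I expect that to be the main obstacle.

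The single-coordinate step I would isolate is: for \emph{any} real-valued function $\Psi$ on $\+H$, \emph{any} $\-R^N$-valued function $\Theta$ on $\+H$, and \emph{any} $L$-Lipschitz $\phi:\-R^N\to\-R$,
\[
\-E_\sigma \sup_{h\in\+H}\bigl[\Psi(h) + \sigma\,\phi(\Theta(h))\bigr]
\;\le\;
\-E_{\epsilon_1,\dots,\epsilon_N}\sup_{h\in\+H}\Bigl[\Psi(h) + \sqrt2\,L\,\textstyle\sum_{j=1}^N \epsilon_j\,\Theta_j(h)\Bigr],
\]
with $\sigma,\epsilon_1,\dots,\epsilon_N$ independent Rademacher variables. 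Granting this, the lemma follows by applying it $m$ times, once per index $i$, taking $\phi=\Phi_i$, $\Theta(h)=h(x_i)$, $L=\mu_i$, and $\Psi$ absorbing all terms with index $\ne i$ — those already rewritten in $\epsilon$-form for indices $<i$ together with those still in $\sigma$-form for indices $>i$. Each application is made conditionally on all the other Rademacher variables, so the factor $\sqrt2$ multiplies only the freshly introduced term $\mu_i\sum_j\epsilon_{ij}h_j(x_i)$ and never compounds; after all $m$ steps one has $\sup_h\sum_i\sqrt2\,\mu_i\sum_j\epsilon_{ij}h_j(x_i)=\sqrt2\sup_h\sum_{i,j}\epsilon_{ij}\mu_i h_j(x_i)$, and dividing by $m$ gives the claimed bound.

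To prove the single-coordinate step I would symmetrize both sides. On the left, $\-E_\sigma\sup_h[\Psi(h)+\sigma\phi(\Theta(h))]=\frac12\sup_{h,h'}[\Psi(h)+\Psi(h')+\phi(\Theta(h))-\phi(\Theta(h'))]$, and the Lipschitz hypothesis gives $\phi(\Theta(h))-\phi(\Theta(h'))\le L\|\Theta(h)-\Theta(h')\|_2$. On the right, since $\epsilon$ and $-\epsilon$ are equidistributed, $\-E_\epsilon\sup_h[\Psi(h)+\sqrt2 L\langle\epsilon,\Theta(h)\rangle]=\frac12\-E_\epsilon\sup_{h,h'}[\Psi(h)+\Psi(h')+\sqrt2 L\langle\epsilon,v(h,h')\rangle]$ with $v(h,h')=\Theta(h)-\Theta(h')$. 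So it suffices to show $\sup_{h,h'}[\Psi(h)+\Psi(h')+L\|v(h,h')\|_2]\le\-E_\epsilon\sup_{h,h'}[\Psi(h)+\Psi(h')+\sqrt2 L\langle\epsilon,v(h,h')\rangle]$. Picking a $\delta$-near-maximizing pair $(h^*,h'^*)$ of the left side and, for each fixed $\epsilon$, selecting on the right side either $(h^*,h'^*)$ or the swapped pair $(h'^*,h^*)$ according to the sign of $\langle\epsilon,v(h^*,h'^*)\rangle$ — legitimate since the offset $\Psi(h^*)+\Psi(h'^*)$ is swap-invariant while $v$ flips sign — lower-bounds the right side by $\Psi(h^*)+\Psi(h'^*)+\sqrt2 L\,\-E_\epsilon|\langle\epsilon,v(h^*,h'^*)\rangle|$. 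The final ingredient is the sharp Khintchine inequality $\-E_\epsilon|\sum_j\epsilon_j v_j|\ge\frac{1}{\sqrt2}\|v\|_2$ for all $v\in\-R^N$, which converts this into $\Psi(h^*)+\Psi(h'^*)+L\|v(h^*,h'^*)\|_2$ and closes the argument as $\delta\to0$. Here is the crux: an elementary $L^1$–$L^2$–$L^4$ Hölder bound only yields the constant $1/\sqrt3$ (hence an overall $\sqrt3$), so recovering exactly $\sqrt2$ requires the optimal Khintchine constant of Szarek, which I would cite as a known fact; the only other care needed is the routine measurability/attainment bookkeeping when the suprema over $\+H$ are not attained, handled throughout by $\delta$-near-maximizers.
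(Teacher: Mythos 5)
This lemma is imported verbatim from \citet{CKMY16} (their Lemma 5), and the paper offers no proof of its own, so there is nothing internal to compare against; what you have done is reconstruct the proof from the source literature, and your reconstruction is correct. It is the standard Maurer-style vector-contraction argument that underlies the CKMY result: a single-coordinate contraction step proved by symmetrizing both sides into suprema over pairs, bounding $\phi(\Theta(h))-\phi(\Theta(h'))$ by $L\|\Theta(h)-\Theta(h')\|_2$, lower-bounding the symmetrized right-hand side at a near-maximizing pair via the swap trick, and invoking Khintchine's inequality $\-E_\epsilon|\langle\epsilon,u\rangle|\ge \|u\|_2/\sqrt2$; followed by peeling over $i=1,\dots,m$ conditionally on the other Rademacher variables so that the $\sqrt2$ never compounds. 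All of these steps check out, including the bookkeeping in the peeling (the $\Psi$ at stage $i$ absorbs the already-converted $\epsilon$-terms for $k<i$ and the untouched $\sigma$-terms for $k>i$) and the $\delta$-near-maximizer handling of non-attained suprema. Your identification of the one genuinely nontrivial external ingredient is also accurate: the elementary $L^1$--$L^2$--$L^4$ H\"older bound only yields the Khintchine constant $1/\sqrt3$ and hence an overall $\sqrt3$, and recovering the stated $\sqrt2$ requires the optimal constant $A_1=1/\sqrt2$ (Szarek; see also Haagerup and Lata\l{}a--Oleszkiewicz for simpler proofs), which is legitimately cited as a known fact rather than reproved.
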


The second one computes the Lipschitz constants of the $\ell^1$ losses by bounding its gradient's 2-norm.

\begin{lemma}[Lipschitzness]
    Given a scoring function $f:\+X \times \+Y \rightarrow \-R$, let $f(x) = [f(x,y)]_{y \in \+Y} \in \-R^{|\+Y|}$ be the vector of scores for each label.
    For any two scoring functions $f,f'$ and data $(x,y)$, we have that
    \begin{equation}
    \begin{aligned}
        |\-P_f(y|x) - \-P_{f'}(y|x)|
        \le \frac{\sqrt{2}}{4} \|f(x) - f'(x)\|_2
    \end{aligned}
    \end{equation}
    Furthermore, for any constraint $C$, we have 
    \begin{equation}
        \begin{aligned}
            |\-P_f(C|x) - \-P_{f'}(C|x)|
            \le \frac{1}{4}\sqrt{1 + \frac{1}{|C(x)|}} \|f(x) - f'(x)\|_2
        \end{aligned}
    \end{equation}
    where $\-P_f(C|x)=\-P_f(C(x)|x)=\sum_{y \in C(x)} \-P_f(y|x)$.
\end{lemma}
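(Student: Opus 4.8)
The plan is to read both inequalities as Lipschitz estimates for softmax-type maps and to obtain them by integrating an explicitly computed gradient along the segment joining the two score vectors. Fix $x$, write $u = f(x)$ and $u' = f'(x)$ in $\-R^{|\+Y|}$, and put $p = \softmax(u)$. Consider the smooth scalar maps $g_y : w \mapsto \softmax(w)_y$ and $h_C : w \mapsto \sum_{y\in C(x)}\softmax(w)_y$, so that $\-P_f(y\mid x) = g_y(f(x))$ and $\-P_f(C\mid x) = h_C(f(x))$. By the fundamental theorem of calculus along the segment from $u'$ to $u$, $|g_y(u) - g_y(u')| \le \|u - u'\|_2 \sup_w \|\nabla g_y(w)\|_2$, and likewise for $h_C$. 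Thus it suffices to show $\sup_w \|\nabla g_y(w)\|_2 \le \frac{\sqrt2}{4}$ and $\sup_w\|\nabla h_C(w)\|_2 \le \frac14\sqrt{1 + 1/|C(x)|}$, the suprema taken over $w \in \-R^{|\+Y|}$ (equivalently, over $p = \softmax(w) \in \Delta_{\+Y}$).

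For the first bound, the softmax Jacobian gives $\nabla g_y(w) = p_y\,(e_y - p)$, so $\|\nabla g_y(w)\|_2 = p_y\|e_y - p\|_2$. Since $\|e_y - p\|_2^2 = (1 - p_y)^2 + \sum_{y'\ne y}p_{y'}^2 \le 2(1 - p_y)^2$ — using $\sum_{y'\ne y}p_{y'}^2 \le \big(\sum_{y'\ne y}p_{y'}\big)^2 = (1-p_y)^2$ by nonnegativity — we get $\|\nabla g_y(w)\|_2 \le \sqrt2\,p_y(1 - p_y) \le \frac{\sqrt2}{4}$. Feeding this into the displayed inequality proves the first claim, and the same Lipschitz constant then transfers to the $\ell^1$ loss $L(x,y,f) = 1 - \-P_f(y\mid x)$, which is exactly the input the contraction argument in the proof of Lemma~\ref{gen-bound} requires.

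For the second bound, differentiating $h_C$ gives $\partial_{w_j} h_C(w) = p_j\big(\-1\{j\in C(x)\} - q\big)$, where $q := \sum_{y\in C(x)}p_y = \-P_f(C\mid x)$, hence
\[
  \|\nabla h_C(w)\|_2^2 \;=\; (1 - q)^2\sum_{j\in C(x)} p_j^2 \;+\; q^2\sum_{j\notin C(x)} p_j^2 .
\]
The remaining work is to maximize this quantity over the simplex. I would proceed in two stages: first fix the mass $q$ that $p$ places on $C(x)$ and bound $\sum_{j\in C(x)}p_j^2$ and $\sum_{j\notin C(x)}p_j^2$ in terms of $q$, $1-q$ and $|C(x)|$, using the equality constraints $\sum_{j\in C(x)}p_j = q$ and $\sum_{j\notin C(x)}p_j = 1-q$ together with Cauchy–Schwarz; then optimize the resulting function of $q$, which is maximized at $q = 1/2$. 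Extracting the cardinality-dependent constant from this optimization — rather than settling for the cruder constant one gets by discarding it — is the main obstacle; the gradient computations themselves are routine. The same analysis, specialized to the case where $|C(x)| \equiv c_0$ is constant, is what yields the sharper coefficient $\frac14\sqrt{1/c_0 + 1/(c - c_0)}$ quoted in the remark following Lemma~\ref{gen-bound}, which enters the improved Rademacher term once the $\sqrt2$ from the contraction Lemma~\ref{contraction} is absorbed.
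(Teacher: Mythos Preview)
Your overall strategy---bound the Lipschitz constant by the supremum of the gradient $2$-norm and integrate along the segment---is exactly the paper's approach, and your treatment of the first inequality is correct (the paper obtains it as the special case $C(x)=\{y\}$ of the second).

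The gap is in the second inequality. You plan to control $\sum_{j\in C(x)} p_j^2$ and $\sum_{j\notin C(x)} p_j^2$ via Cauchy--Schwarz, but Cauchy--Schwarz applied to the constraint $\sum_{j\in C(x)} p_j = q$ yields the \emph{lower} bound $\sum_{j\in C(x)} p_j^2 \ge q^2/|C(x)|$, which goes the wrong way for upper-bounding the gradient norm. The sharp upper bound, from nonnegativity, is $\sum_{j\in C(x)} p_j^2 \le q^2$ (concentrate mass at one label), and similarly on the complement; plugging these into your display gives $\|\nabla h_C\|_2^2 \le 2q^2(1-q)^2 \le 1/8$, hence $\|\nabla h_C\|_2 \le \sqrt{2}/4$ with no dependence on $|C(x)|$. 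This is tight: with $|\+Y|=3$, $C(x)=\{1,2\}$ and $p\to(1/2,0,1/2)$ one gets $\|\nabla h_C\|_2 \to \sqrt{2}/4 \approx 0.354$, which already exceeds the claimed constant $\tfrac14\sqrt{1+1/2}\approx 0.306$. So the cardinality-dependent constant cannot be obtained by bounding the gradient, and the second display of the lemma is in fact false for $|C(x)|\ge 2$. The paper's own proof makes the mirror-image error at this step---it asserts the gradient norm is maximized when $p$ is uniform on each block, whereas uniformity \emph{minimizes} each $\sum p_j^2$ for fixed $q$. What survives is the constant $\sqrt{2}/4$, which is enough for Lemma~\ref{gen-bound} itself but not for the sharper coefficient $\tfrac{\sqrt{2}}{2}\sqrt{1/(c-c_0)+1/c_0}$ advertised after it.
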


\begin{proof}
We start with the second claim. 
Suppose $C(x) = \+Y$, then $\-P_f(C|x) = 0$ for any scoring function $f$, so the inequality trivially holds. 
Next, we assume $C(x) \subset \+Y$.
Given a constraint $C:\+X \rightarrow 2^\mathcal{Y}$, the derivative of its violation function with respect to the score for a label $y$ is
\begin{equation}
    \begin{aligned}
        \frac{\dd \-P_f(C|x)}{\dd f(x,y)}
        & = \sum_{y' \in C(x)} \frac{\dd \-P_f(y'|x)}{\dd f(x,y)} \\
        & = \sum_{y' \in C(x)}  \-P_f(y|x) \-1\{y' = y\} -  \-P_f(y|x) \-P_f(y'|x)
    \end{aligned}
\end{equation}

The 2-norm of the gradient of the mapping $f(x) \mapsto \-P_f(y|x)$ is then
\begin{equation}
\begin{aligned}
    \left(
        \sum_{y \in \+Y} \left( \sum_{y' \in C(x)}  \-P_f(y|x) \-1\{y' = y\} -  \-P_f(y|x) \-P_f(y'|x) \right)^2
    \right)^{1/2}
\end{aligned}
\end{equation}
which is maximized when $\-P_f(y|x) = \frac{1}{2|C(x)|}$ for all $y \in C(x)$ and $\-P_f(y|x) = \frac{1}{2(\+Y-|C(x)|)}$ for all $y \notin C(x)$ (so that $\-P_f(C|x)=1/2$). The maximum is then 
\begin{equation}
\label{eqn:lip}
\begin{aligned}
    & \left(
         \sum_{y \in C(x)} \left( \sum_{y' \in C(x)}  \-P_f(y|x) \-1\{y' = y\} -  \-P_f(y|x) \-P_f(y'|x) \right)^2
        + \sum_{y \notin C(x)} \left( \sum_{y' \in C(x)}  \-P_f(y|x) \-P_f(y'|x) \right)^2
    \right)^{1/2} \\ 
    & = \sqrt{|C(x)|\left(\frac{1}{4|C(x)|}\right)^2 + |\+Y-C(x)| \left(\frac{1}{2|\+Y-C(x)|}\right)^2} \\ 
    & = \sqrt{\frac{1}{16 |C(x)|} + \frac{1}{16|\+Y-C(x)|}} \\
    & \le \sqrt{\frac{1}{16 |C(x)|} + \frac{1}{16}} \\ 
    & = \frac{1}{4}\sqrt{1 + \frac{1}{|C(x)|}}
\end{aligned}
\end{equation}
The boundedness of the gradient implies that the function $f(x) \mapsto \-P_f(C|x)$ is Lipschitz with a Lipschitz constant $\frac{1}{4}\sqrt{1 + \frac{1}{|C(x)|}}$.

The first claim then follows by considering the special constraint $C(x) := \{y_\ora(x)\}$ so that $|C(x)| = 1$.
\end{proof}

Next, we present the proof of the theorem. By standard Rademacher complexity bounds, given a labeled dataset $S$ of size $m$, for any $\delta>0$, with probability at least $1-\delta$, the following inequality holds uniformly for $f \in \+F$:
\begin{equation}
\begin{aligned}
    R(f) 
    \le \hat{R}(f;S_\:L) + 2 \mathfrak{R}_m(\+H) + \sqrt{\frac{\log (1/\delta)}{2m}}
\end{aligned}
\end{equation}
where
\begin{equation}
    \begin{aligned}
        \+H
        := \left\{(
            x,y) \mapsto 1- \-P_f(y|x): f \in \+F
        \right\}
    \end{aligned}
\end{equation}

By the contraction lemma and Lipschitzness, we have 
\begin{equation}
\begin{aligned}
    \mathfrak{R}_m(\+H) 
    & = \frac{1}{m} \-E_{S} \-E_\sigma \left[
        \sup_{f \in \+F} \sum_{i=1}^m \sigma_i \left( 1 - \-P_f(y_i|x_i)\right)
    \right] \\ 
    & \le \frac{\sqrt{2}}{m}  \-E_{S} \-E_\epsilon \left[
        \sup_{f \in \+F} \sum_{i=1}^m \sum_{y \in \+Y}\epsilon_{iy} \frac{\sqrt{2}}{4} f(x, y) 
    \right]\\ 
    & = \frac{1}{2m}  \-E_{S} \-E_\epsilon \left[
        \sup_{f \in \+F} \sum_{i=1}^m \sum_{y \in \+Y} \epsilon_{iy} f(x, y) 
    \right]
\end{aligned}
\end{equation}
This implies 
\begin{equation}
    \begin{aligned}
        R(f) 
        \le \hat{R}(f;S_\:L) + \mathfrak{R}_m(\+F) + \sqrt{\frac{\log (1/\delta)}{2m}}
    \end{aligned}
\end{equation}

The proof for the generalization bound of violation follows from the same argument. In particular, if the size of the constrained set $C(x)$ is a constant, namely $|C(x)|=c_0 < c = |\+Y|$ for all $x \in \+X$, then from Equation \eqref{eqn:lip}, we know that the mapping $x \mapsto 1- \-P_f(y|x)$ is Lipschitz with a Lipschitz constant $\frac{1}{4}\sqrt{\frac{1}{c_0} + \frac{1}{c-c_0}}$. So in this case, the generalization bound for the violation function can be improved as 
\begin{equation}
\begin{aligned}
    V(f) 
            \le \hat{V}(f;S_\:U) 
            + \frac{\sqrt{2}}{2}\sqrt{\frac{1}{c_0} + \frac{1}{c-c_0}}\mathfrak{R}_{m_\:U}(\+F) 
            + \sqrt{\frac{\log(1/\delta)}{2m_\:U}}
\end{aligned}
\end{equation}

\subsection{Proof of Theorem \ref{bound-regularization}}

\ulit{Step 1. Showing the expected violation of $\hat{f_\rho}$ is bounded.}

First, we have with probability $1-\delta$,
\begin{equation}
\begin{aligned}
    \rho \hat{V}(\hat{f}_\rho) 
    & \le \hat{R}(\hat{f}_\rho) +  \rho \hat{V}(\hat{f}_\rho) \\ 
    & \le \hat{R}(f_\infty) +  \rho \hat{V}(f_\infty) \\ 
    & \le 1 +  \rho \hat{V}(f_\infty) \\ 
    & \le 1 + \rho\left(u + \sqrt{\frac{\log(1/\delta)}{2m_\:U}} \right)\\ 
\end{aligned}
\end{equation}
where the last step follows by applying Hoeffding's inequality to $\hat{V}(f_\infty)$. This result implies $\hat{V}(\hat{f}_\rho) \le \frac{1}{\rho} + u + \sqrt{\frac{\log(1/\delta)}{2m_\:U}}$.

Second, Theorem \ref{gen-bound} claims that with probability $1-\delta$, the following inequality holds:
\begin{equation}
\begin{aligned}
    V(\hat{f}_\rho) - \hat{V}(\hat{f}_\rho) \
    \le \mathfrak{R}_{m_\:U}(\+F) + \sqrt{\frac{\log(1/\delta)}{2m_\:U}}
\end{aligned}
\end{equation}

Putting these two inequalities together using union bound, we know with probability $1-2\delta$,

\begin{equation}
\begin{aligned}
    V(\hat{f}_\rho) 
    & \le \frac{1}{\rho} + u + \mathfrak{R}_{m_\:U}(\+F) + \sqrt{\frac{\log(1/\delta)}{2m_\:U}} + \sqrt{\frac{\log(1/\delta)}{2m_\:U}} \\ 
    & = \frac{1}{\rho} + u + B(\delta,m_\:U,\+F)
\end{aligned}
\end{equation}

Namely, with probability no less than $1-2\delta$, $\hat{f}_\rho$ lies in $\+F_{1/\rho + u + B(\delta,m_\:U,\+F)}$, which is a fixed hypothesis class.

\ulit{Step 2. Bounding the generalization gap of $\+L_\rho$.}

Since $\hat{f}_\rho \in \+F_{1/\rho + u + B(\delta,m_\:U,\+F)}$, we can bound the generalization gap of $\+L_\rho$ using the uniform convergence property of $\+F_{1/\rho + u + B(\delta,m_\:U,\+F)}$. By standard decomposition, 
\begin{equation}
\begin{aligned}
    \+L_\rho (\hat{f}_\rho) - \+L_\rho (f_\rho)
    & = 
        \underbrace{\+L_\rho (\hat{f}_\rho) - \hat{\+L}_\rho (\hat{f}_\rho)}_{(*)}
        + \underbrace{\hat{\+L}_\rho (\hat{f}_\rho) - \hat{\+L}_\rho (f_\rho)}_{\le 0}
        + \underbrace{\hat{\+L}_\rho (f_\rho) - \+L_\rho (f_\rho)}_{(**)}
\end{aligned}
\end{equation}
For term $(*)$, combining the two inequalities in Lemma \ref{gen-bound} and Step 1 via union bound, we know with probability $1-4\delta$,
\begin{equation}
\begin{aligned}
    (*) 
    \le \mathfrak{R}_{m_\:L}(\+F_{1/\rho + u + B(\delta,m_\:U,\+F)}) + \sqrt{\frac{\log(1/\delta)}{2m_\:L}} + \rho \left( \mathfrak{R}_{m_\:U}(\+F_{1/\rho + u + B(\delta,m_\:U,\+F)}) + \sqrt{\frac{\log(1/\delta)}{2m_\:U}}\right)
\end{aligned}
\end{equation}
For term $(**)$, using Hoeffding's inequality for the risk and violation separately, we have with probability $1-2\delta$,
\begin{equation}
\begin{aligned}
    (**) 
    \le \sqrt{\frac{\log(2/\delta)}{2m_\:L}} + \rho \sqrt{\frac{\log(2/\delta)}{2m_\:U}}
\end{aligned}
\end{equation}
By union bound, with probability $1-6\delta$,
\begin{equation}
\begin{aligned}
    \+L_\rho (\hat{f}_\rho) - \+L_\rho (f_\rho) 
    & \le \underbrace{\mathfrak{R}_{m_\:L}(\+F_{1/\rho + u + B(\delta,m_\:U,\+F)}) + \rho \mathfrak{R}_{m_\:U}(\+F_{1/\rho + u + B(\delta,m_\:U,\+F)}) + 2 \sqrt{\frac{\log(2/\delta)}{2m_\:L}} + 2\rho \sqrt{\frac{\log(2/\delta)}{2m_\:U}}}_{\text{for convenience, denote these terms as }B'}
\end{aligned}
\end{equation}

\ulit{Step 3. Bounding the risk of $f_\rho$.}

By Step 2, we have with probability $1-6\delta$,
\begin{equation}
\begin{aligned}
    R(\hat{f}_\rho) 
    & \le R(f_\rho) + \rho V(f_\rho) - \rho V(\hat{f}_\rho) + B' \\ 
    & \le R(f_0) + \rho V(f_0) - \rho V(\hat{f}_\rho) + B' \\
    & \le R(f_0) + \rho V(f_0) - \rho V(f_\infty) + B' 
\end{aligned}
\end{equation}
We conclude that with probability $1-6\delta$,
\begin{equation}
\begin{aligned}
    R(\hat{f}_\rho) 
    & \le R(f_0) + \rho V(f_0) - \rho V(f_\infty) \\ 
    & \quad + \mathfrak{R}_{m_\:L}(\+F_{1/\rho + u + B(\delta,m_\:U,\+F)}) + \rho \mathfrak{R}_{m_\:U}(\+F_{1/\rho + u + B(\delta,m_\:U,\+F)}) + 2 \sqrt{\frac{\log(2/\delta)}{2m_\:L}} + 2\rho \sqrt{\frac{\log(2/\delta)}{2m_\:U}}
\end{aligned}
\end{equation}
as claimed.

\subsection{Proof of Example \ref{rademacher-example}}

The normalizing factor $\sum_{j=1}^c \e^{w_j^\:T x}$ is maximized at $w_1=x=[1,0,0,\dots,0]$ and $w_2=\dots=w_c=0$ so that
\begin{equation}
\begin{aligned}
    \sum_{j=1}^c \e^{w_j^\:T x}
    \le \e + (c-1) 
    \le c+2
\end{aligned}
\end{equation}
This implies $\-P_w(y_c) \ge (\e^{w_c^\:T x})/(c+2)$. Therefore, $\-E[\-P_w(y_c)] \le t$ implies $t(c+2) \ge \-E[\e^{w_c^\:T x}] \ge \e^{\-E[w_c^\:T x]} = \e^{\alpha^\:T w_c}$, or equivalently $\alpha^\:T w_c \le \log(t(c+2))$. 
Therefore, given a set of data $S=\{x_i\}_{i=1}^m$ and Rademacher random variables $\epsilon$, the inner supremum in the definition of Rademacher complexity can be upper bounded by solving the following program
\begin{equation}
\begin{aligned}
    \max & \quad \sum_{i=1}^m \sum_{j=1}^c \epsilon_{i, j} w_j^\:T {x}_i \\ 
            \mathrm{s.t.} & \quad \sum_{j=1}^c w_j^\:T w_j \le 1 \\ 
            & \quad \alpha^\:T w_c \le \log(t(c+2))
\end{aligned}
\end{equation}
Consider its Lagrangian
\begin{equation}
\begin{aligned}
    L(w, \lambda, \mu) 
    = \sum_{i=1}^m \sum_{j=1}^c \epsilon_{i,j} w_j^\:T x_i 
        + \lambda \left(1 - \sum_{j=1}^n w_j^\:T w_j \right)
        + \nu \left(\log(t(c+2)) - \alpha^\:T w_c \right)
\end{aligned}
\end{equation}
Denote $\xi_j := \sum_{i=1}^m \epsilon_{i,j} {x}_i$. The Lagrangian is then maximized at $w_j = \xi_j/(2\lambda)$ for $j<c$ and $w_c = (\xi_c- \nu\alpha)/(2\lambda)$. The dual function then writes:
\begin{equation}
\begin{aligned}
    g(\lambda, \nu)
    = \nu \log(t(c+2))  +  \lambda  + \sum_{j=1}^{c-1} \frac{\| \xi_j \|^2_2}{4 \lambda} +\frac{\| \xi_c - \nu \alpha \|^2_2}{4 \lambda} 
    \ge \nu \log(t(c+2)) + \sqrt{ \sum_{j=1}^{c-1}\| \xi_j \|_2^2 + \| \xi_c - \nu \alpha \|_2^2 }
\end{aligned}
\end{equation}
By weak duality, we have that
 \begin{equation}
 \label{dual}
\begin{aligned}
    \hat{\mathfrak{R}}_m (\+F_t) 
    \le \frac{1}{m} \-E_\epsilon \left[
        \min_{\nu \ge 0} \left( 
            \nu \log(t(c+2)) + \sqrt{ \sum_{j=1}^{c-1}\| \xi_j \|_2^2 + \| \xi_c - \nu \alpha \|_2^2 }
        \right)
    \right]
\end{aligned}
\end{equation}

Assuming $t<1/(c+2)$ so that $\log(t(c+2))<0$. We can upper bound \eqref{dual} as 
\begin{equation}
    \frac{1}{m} \-E_\epsilon \left[
        \min_{\nu \ge 0} \left(
        \sqrt{ \sum_{j=1}^{c-1}\| \xi_j \|_2^2 + \| \xi_c - \nu \alpha \|_2^2 }
        \right)
    \right]
\end{equation}
The function $\sum_{j=1}^{c-1}\| \xi_j \|_2^2 + \| \xi_c - \nu \alpha \|_2^2$ is minimized at $\nu = 0$ if $\xi_c^\:T \alpha \le 0$ and $\nu = \xi_c^\:T \alpha /\|\alpha\|_2^2$ otherwise. Denote the event $\xi_c^\:T \alpha \le 0$ as $\+E$. By symmetry, we have that $\-P(\+E) = 1/2$ so that
\begin{equation}
    \frac{1}{m} \-E_\epsilon \left[
        \min_{\nu \ge 0} \left(
        \sqrt{ \sum_{j=1}^{c-1}\| \xi_j \|_2^2 + \| \xi_c - \nu \alpha \|_2^2 }
        \right)
    \right]
    = \frac{1}{2} \-E_\epsilon\left[ \sqrt{\sum_{j=1}^{c}\| \xi_j \|_2^2} \middle| \+E \right] 
        + \frac{1}{2} \-E_\epsilon \left[\sqrt{\sum_{j=1}^{c}\| \xi_j \|_2^2 - \frac{(\xi_c^\:T \alpha)^2}{\|\alpha\|_2^2}} \middle| \overline{\+E} \right]
\end{equation}
Again by symmetry, the quantity $(\xi_c^\:T \alpha)^2$ is independent of $\+E$. Therefore, by Jensen's inequality, we have that
\begin{equation}
    \begin{aligned}
        \-E_{S,\epsilon}
        \left[\sqrt{\sum_{j=1}^{c}\| \xi_j \|_2^2 - \frac{(\xi_c^\:T \alpha)^2}{\|\alpha\|_2^2}} \middle| \overline{\+E} \right]
        & \le \sqrt{
                \-E_{S,\epsilon} \left[
                    \sum_{j=1}^{c}\| \xi_j \|_2^2 - \frac{(\xi_c^\:T \alpha)^2}{\|\alpha\|_2^2}
                \right]
            } \\
        & \le \sqrt{
                cm - \-E_{S,\epsilon} \left[ \frac{(\xi_c^\:T \alpha)^2}{\|\alpha\|_2^2}
                \right]
            }  \\ 
        & = \sqrt{
                   cm - \frac{\operatorname{Var}(\xi_c^\:T \alpha)}{\|\alpha\|_2^2}
            }  \\ 
        & = \sqrt{
                   cm - m\frac{\sigma^2 \|\alpha\|_2^2+\|\alpha\|_2^4}{\|\alpha\|_2^2}
            } \\ 
        & = \sqrt{
                   (c-\sigma^2-\|\alpha\|_2^2)m
            }
    \end{aligned}
\end{equation}
Similarly, we can use Jensen's inequality to bound $\-E_{S,\epsilon}\left[ \sqrt{\sum_{j=1}^{c}\| \xi_j \|_2^2} \middle| \+E \right] \le \sqrt{cm}$. Putting these together, we have that
\begin{equation}
    \mathfrak{R}_m (\+F_t)
    =\-E_x[\hat{\mathfrak{R}}_m (\+F_t)]
    \le \frac{1}{2}\sqrt{\frac{c}{m}} +\frac{1}{2}\sqrt{\frac{c-\sigma^2-\|\alpha\|_2^2}{m}}
\end{equation}

\section{Proofs from Section 4}

\subsection{Proof of Propostion \ref{CCM-Rademacher}}

First, we show the Rademacher complexity of the singleton mapping is zero:
\begin{equation}
\begin{aligned}
    \mathfrak{R}_m(\{(x,y)\mapsto -\mu v(x,y)\}) 
    & = \frac{1}{m} \-E_{x, \epsilon} \left[ 
        \sum_{i=1}^{m} \sum_{y \in \+Y} -\epsilon_{i,y} \mu v(x_i,y)
    \right] \\
    & = \frac{1}{m} \-E_{x} \left[ 
        \sum_{i=1}^{m} \sum_{y \in \+Y} -\-E[\epsilon_{i,y}] \mu v(x_i,y)
    \right] \\
    & = 0
\end{aligned}
\end{equation}

Second, we use the linearity of Rademacher complexity to obtain the desired result. 
\begin{equation}
\begin{aligned}
    \mathfrak{R}_m(\+F^\mu) 
    & = \frac{1}{m} \-E_{x, \epsilon} \left[ \sup_{f \in \+F}
        \sum_{i=1}^{m} \sum_{y \in \+Y} \epsilon_{i,y} (f(x_i,y) - \mu v(x_i,y))
    \right] \\
    & = \frac{1}{m} \-E_{x, \epsilon} \left[ \sup_{f \in \+F}
        \sum_{i=1}^{m} \sum_{y \in \+Y} \epsilon_{i,y} f(x_i,y)
    \right] + \frac{1}{m} \-E_{x, \epsilon} \left[ 
        \sum_{i=1}^{m} \sum_{y \in \+Y} -\epsilon_{i,y} \mu v(x_i,y)
    \right]  \\
    & = \mathfrak{R}_m(\+F)  + \mathfrak{R}_m(\{(x,y)\mapsto -\mu v(x,y)\}) =  \mathfrak{R}_m(\+F) 
\end{aligned}
\end{equation}

\subsection{Proof of Proposition \ref{change-of-risk}}

\begin{enumerate}[label=(\alph*)]
\item
Given any scoring function $f$, let $Z_f^C(x) := \sum_{y \in C(x)} \exp(f(x,y))$ and $Z_f^{-C}(x) := \sum_{y \notin C(x)} \exp(f(x,y))$. We have 
    \begin{equation}
    \label{ce-derivative}
    \begin{aligned}
        \der{\mu} \Delta^\mu_{\ce} (f)
        & = \der{\mu} \-E\left[\log\frac{\exp(f(x,y_\ora)-\mu v(x,y_\ora))}{Z_f^C(x) + Z_f^{-C}(x)/\e^\mu}\right] \\ 
        & = \-E\left[ \der{\mu} \log\frac{\exp(f(x,y_\ora)-\mu v(x,y_\ora))}{Z_f^C(x) + Z_f^{-C}(x)/\e^\mu}\right] \\
        & = \-E\left[
           \frac{Z^{-C}_f(x)/\e^\mu}{Z_f^C(x) + Z_f^{-C}(x)/\e^\mu} - v(x,y_\ora)
        \right] \\
        & = V(f^\mu) - V_\ora
    \end{aligned}
    \end{equation}
Moreover, 
\begin{equation}
    \begin{aligned}
        \der{\mu} V(f^\mu)
        = & \-E\left[ \der{\mu} \frac{Z_{f^\mu}^{-C}(x)}{Z_{f^\mu}(x)} \right]\\ 
        = & \-E\left[ \frac{Z_{f^\mu}(x)(-Z_{f^\mu}^C(x)) + (Z_{f^\mu}^C(x))^2}{\left(Z_{f^\mu}(x)\right)^2} \right] \\ 
        = & \-E\left[  \-P_{f^\mu}^2(-C) -  \-P_{f^\mu}(-C)  \right]
    \end{aligned}
\end{equation}
which is negative and bounded, implying $V(f^\mu) - V_\ora$ is decreasing and Lipschitz with $\mu$. Therefore, there is a $\mu > 0$ such that $R_\ce(f^\mu) < R_\ce(f)$ if and only if the derivative is positive at $\mu = 0$, i.e., $V(f) > V_\ora$.

\item 
By \eqref{ce-derivative},
\begin{equation}
\begin{aligned}
    \Delta^\mu_{\ce} (f) 
    & = \int^\mu_0 \left(V(f^t) - V_\ora\right) \dd t \\ 
    & = \-E\left[ \int^\mu_0 
        \frac{Z^{-C}_f(x)/\e^t}{Z_f^C(x) + Z_f^{-C}(x)/\e^t} \dd t
     \right] - \mu V_\ora \\ 
    & \ge \-E\left[ \int^\mu_0 
    \frac{Z^{-C}_f(x)/\e^t}{Z_f^C(x) + Z_f^{-C}(x)} \dd t
    \right] - \mu V_\ora \\ 
    & = (1-\e^{-\mu})  \-E\left[
    \frac{Z^{-C}_f(x)}{Z_f^C(x) + Z_f^{-C}(x)}
    \right] - \mu V_\ora \\ 
    & = (1-\e^{-\mu}) V(f) - \mu V_\ora 
\end{aligned}
\end{equation}

\item
If $V_\ora=0$, we have 
\begin{equation}
\begin{aligned}
    \Delta^\infty_{\ce} (f) 
    & = \int^\infty_0 \-E\left[
        \frac{Z^{-C}_f(x)/\e^t}{Z_f^C(x) + Z_f^{-C}(x)/\e^t}\right] \dd t \\ 
    & = \-E\left[ \int^\infty_0 
        \frac{Z^{-C}_f(x)/\e^t}{Z_f^C(x) + Z_f^{-C}(x)/\e^t} \dd t \right] \\ 
    & = \-E\left[
        \log \left(\frac{Z_f^C(x) + Z_f^{-C}}{Z_f^C} \right)
    \right] \\ 
    & = V_\ce(f)
\end{aligned}
\end{equation}

\end{enumerate}

\subsection{Proof of Corollary \ref{mu-upper-bound}}

Using Proposition \ref{change-of-risk} (b), this result follows by solving the following equation
\begin{equation}
    (1-\e^{-\mu}) V(f) - \mu V_\ora 
    \ge 0
\end{equation}

It is known that the solution to the inequaltiy $u \le a + b\e^{c u}$ of $u$ is $u \le a-\frac{1}{c}W(-bc\e^{ac})$. Substituting $a=\eta=V(f)/V_\ora=-b$ and $c=-1$ yields the desired result:
\begin{equation}
    \begin{aligned}
        \mu 
        \le W(-\eta/\e^\eta)+\eta
    \end{aligned}
\end{equation}
where the RHS is positive only when $\eta>1$. A plot of this solution as a function of $\eta$ is presented below in Figure \ref{solution}.

\begin{figure}[h]
    \centering    \includegraphics[width=0.36\textwidth]{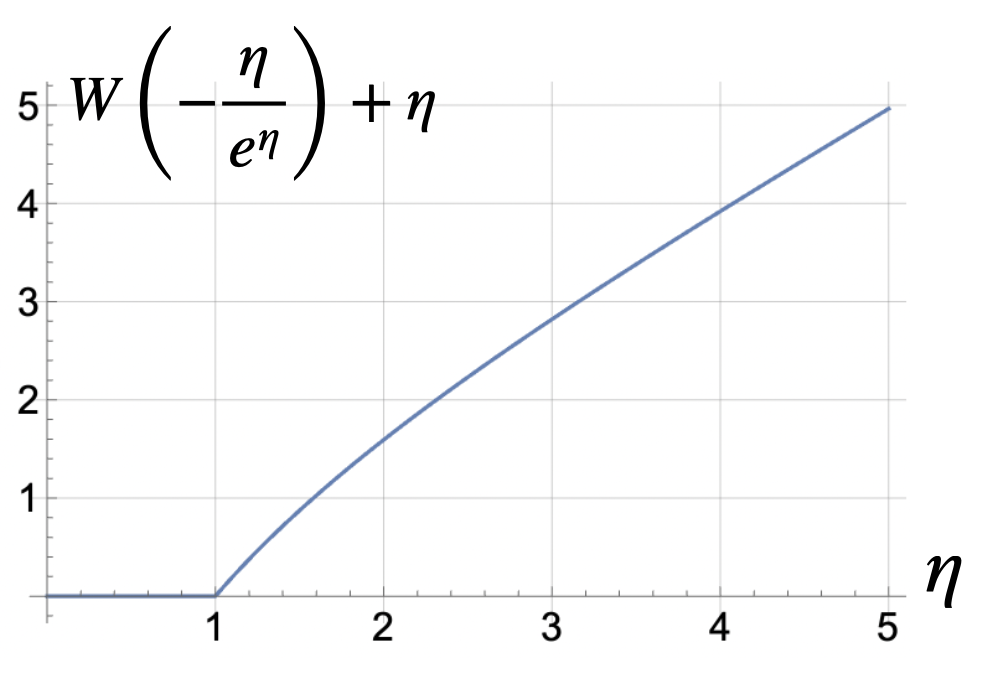}
    \caption{Choice of $\mu$ as a function of $\eta=V(f)/V_\ora$.}
    \label{solution}
\end{figure}

\subsection{Proof of Proposition \ref{on-training}}

This claim follows from the fact that $R_\ce(f^\infty)=R_\ce(f)-V_\ce(f)$ from Proposition \ref{change-of-risk} (c). 

For equation \eqref{post-vs-on}, the first inequality follows from the optimality of $\on$. For the second inequality, by definition we have 
\begin{equation}
\begin{aligned}
    & R_\ce(\post^\infty) + V_\ce(\post) = R_\ce(\post) 
    \le R_\ce(\on) \\ 
    \Rightarrow &  R_\ce(\post^\infty) \le  R_\ce(\on) - V_\ce(\post) \le  R_\ce(\on)  - \min_{f\in\+F} V_\ce(f)
\end{aligned}
\end{equation}

\section{Analysis for Hinge Loss and $\ell^1$ Loss}
\label{alternative-loss}

\subsection{Hinge Loss}
\label{hinge-Loss}

The \emph{margin} of a scoring function $f$ at a sample $(x,y_\ora)$ is defined as 
\begin{equation}
\label{margin}
\begin{aligned}
    m(x,y_\ora, f)
    := \max_{y\in\+Y} \left\{f(x,y)\right\} - f(x,y_\ora)
\end{aligned}
\end{equation}
We denote its expectation as $M(f) = \-E[m(x,y_\ora,f)]$.

Given a loss function $\ell:\+Y\times\+Y \rightarrow \-R$, the structured hinge loss \citep{LondonHuGe16, MLWS19} is defined as the margin of the \emph{loss augmented} scoring function $f+\ell: (x,y)\mapsto f(x,y) + \ell(y, y_\ora)$. Namely,
\begin{equation}
\label{hinge}
\begin{aligned}
    L_\text{hinge} (x,y_\ora, f)
    := m(x,y_\ora, f+\ell)
\end{aligned}
\end{equation}
Therefore, we can study the impact of constrained inference on the hinge loss via the impact on the margin. Let $\Delta_\mathrm{margin}^\mu(f) = M(f) - M(f^\mu)$. We present the following result.

\begin{theorem}[Change of Margin]
\label{change-of-risk-hinge}
    The following results hold:
    \begin{enumerate}[label=(\alph*)]
        \item 
        For any fixed model $f$, there exists an $\mu_0 > 0$ such that $M(f^\mu) \le M(f)$ only if 
        \begin{equation}
        \begin{aligned}
            V_{01}(f) > V_\ora
        \end{aligned}
        \end{equation}
        where $V_{01}(f)$ is the zero-one style violation defined as $\-E[\-1\{\argmax_{y \in \+Y}f(x,y) \ne y_\ora\}]$.

        \item 
        In particular, if the constraint is noise-free, we have 
        \begin{equation}
        \begin{aligned}
           \Delta^\infty_\mathrm{margin}(f)
            & = \-E\left[ \max_{y \in \+Y} f(x,y) - \max_{y\in C(x)} f(x,y) \right] \\
            & = \-E\left[ \left(\max_{y \notin C(x)} f(x,y) - \max_{y\in C(x)} f(x,y)\right)_+ \right]
        \end{aligned}
        \end{equation}
    \end{enumerate}
\end{theorem}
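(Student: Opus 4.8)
The plan is to reduce both parts to one pointwise identity for the margin of the constrained model and then take expectations, in direct analogy with the way Theorem~\ref{change-of-risk} follows from an integral representation of $\Delta^\mu_\ce(f)$. Fix $x$ and abbreviate $A(x):=\max_{y\in C(x)}f(x,y)$ and $B(x):=\max_{y\notin C(x)}f(x,y)$, with the convention $B(x)=-\infty$ when $C(x)=\+Y$ (so the violation is trivially zero there). Since $f^\mu(x,y)=f(x,y)-\mu\,\-1\{y\notin C(x)\}$, we have $\max_{y\in\+Y}f^\mu(x,y)=\max\{A(x),\,B(x)-\mu\}$ and $f^\mu(x,y_\ora)=f(x,y_\ora)-\mu\,v_C(x,y_\ora)$. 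Subtracting the two margins $m(x,y_\ora,f)=\max_y f(x,y)-f(x,y_\ora)$ and $m(x,y_\ora,f^\mu)$ and using the elementary identity $\max\{A,B\}-\max\{A,B-\mu\}=\min\{\mu,(B-A)_+\}$ (verified by splitting into $B\le A$, $A<B\le A+\mu$, $B>A+\mu$), I get
\[
 m(x,y_\ora,f)-m(x,y_\ora,f^\mu)=\min\{\mu,\,(B(x)-A(x))_+\}-\mu\,v_C(x,y_\ora).
\]
Taking expectations yields the master formula $\Delta^\mu_\mathrm{margin}(f)=\-E\big[\min\{\mu,(B-A)_+\}\big]-\mu V_\ora$, which powers both claims.

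For part (a), bound $\min\{\mu,(B(x)-A(x))_+\}\le\mu\,\-1\{B(x)>A(x)\}$ pointwise, so that $\Delta^\mu_\mathrm{margin}(f)\le\mu\big(\-P(B(x)>A(x))-V_\ora\big)$ for every $\mu>0$. The event $\{B(x)>A(x)\}$ is exactly the event that the hard prediction $\argmax_y f(x,y)$ lands outside $C(x)$ (with ties between $C(x)$ and its complement resolved by any fixed convention), whose probability is the zero-one violation $V_{01}(f)$ — the zero-one analogue of the $\ell^1$ violation $V(f)$ appearing in Theorem~\ref{change-of-risk}. Hence if $M(f^{\mu_0})\le M(f)$, i.e.\ $\Delta^{\mu_0}_\mathrm{margin}(f)\ge 0$, for some $\mu_0>0$, then $V_{01}(f)\ge V_\ora$, with strict inequality whenever the improvement is strict, which is the asserted necessary condition. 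Equivalently, one can observe that $\mu\mapsto\Delta^\mu_\mathrm{margin}(f)$ is concave, vanishes at $\mu=0$, and has right derivative $V_{01}(f)-V_\ora$ there, so it can become positive at a positive $\mu$ only if this derivative is positive.

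For part (b), noise-freeness gives $v_C(x,y_\ora)=0$ almost surely, so the master formula collapses to $\Delta^\mu_\mathrm{margin}(f)=\-E[\min\{\mu,(B-A)_+\}]$; letting $\mu\to\infty$ and applying monotone convergence gives $\Delta^\infty_\mathrm{margin}(f)=\-E[(B(x)-A(x))_+]$, the second displayed expression. For the first expression, argue directly: since $y_\ora\in C(x)$ a.s., $f^\infty(x,y_\ora)=f(x,y_\ora)$ and $\max_y f^\infty(x,y)=\max_{y\in C(x)}f(x,y)=A(x)$, so $M(f)-M(f^\infty)=\-E[\max_{y\in\+Y}f(x,y)-\max_{y\in C(x)}f(x,y)]$; the two forms are identified via $\max\{A,B\}-A=(B-A)_+$.

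I expect essentially all the work to sit in the pointwise step and its edge cases: keeping track of whether $y_\ora\in C(x)$, the degenerate case $C(x)=\+Y$, and the behaviour of $\argmax_y f(x,y)$ under ties between $C(x)$ and its complement (which is where the precise reading of $V_{01}(f)$ as a zero-one violation has to be pinned down, and where a small integrability assumption on margins is implicitly used). Once $\Delta^\mu_\mathrm{margin}(f)=\-E[\min\{\mu,(B-A)_+\}]-\mu V_\ora$ is in hand, parts (a) and (b) are one-line consequences.
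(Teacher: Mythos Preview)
Your proposal is correct and closely parallels the paper's argument, but you organize it around an explicit closed-form identity rather than a derivative computation. The paper differentiates $\Delta^\mu_\mathrm{margin}(f)$ directly to obtain $\der{\mu}\Delta^\mu_\mathrm{margin}(f)=\-E[v(x,y_{f^\mu})]-V_\ora$ (where $y_{f^\mu}=\argmax_y f^\mu(x,y)$), observes that this derivative is non-increasing in $\mu$, and concludes that positivity anywhere forces positivity at $\mu=0$, i.e.\ $V_{01}(f)>V_\ora$; part~(b) is then a one-line subtraction $M(f)-M(f^\infty)$. Your master formula $\Delta^\mu_\mathrm{margin}(f)=\-E[\min\{\mu,(B-A)_+\}]-\mu V_\ora$ is precisely the antiderivative of the paper's expression (since $\der{\mu}\min\{\mu,(B-A)_+\}=\-1\{B-\mu>A\}=v(x,y_{f^\mu})$ almost everywhere), so the two routes are equivalent in content. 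What your framing buys is that it sidesteps the non-smoothness of $\mu\mapsto\max_y f^\mu(x,y)$ entirely: the paper has to concede in a remark that $\mu\mapsto\-E[v(x,y_{f^\mu})]$ need not be continuous because the argmax can jump, whereas your pointwise identity $\max\{A,B\}-\max\{A,B-\mu\}=\min\{\mu,(B-A)_+\}$ is exact for every $\mu$ and the subsequent concavity/upper-bound step needs no regularity. Your treatment of strict versus non-strict inequality in~(a) is also more careful than the paper's.
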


\begin{proof}
\begin{enumerate}[label=(\alph*)]
    \item 
    The derivative of the change of the margin is
    \begin{equation}
    \begin{aligned}
        \der{\mu} \Delta^\mu_\mathrm{margin}(f) =
        -\der{\mu} M(f^\mu)
        & = - \der{\mu} \-E \left[
            \max_{y \in \+Y} \{ f(x,y) - \mu v(x,y) \} - f(x,y_\ora) + \mu v (x,y_\ora) 
        \right] \\ 
        & = \-E[v(x,y_{f^\mu}) - v(x,y_\ora)]
    \end{aligned}
    \end{equation}
    where $y_{f^\mu}:= \argmax_{y \in \+Y} \{ f(x,y) - \mu v(x,y)\}$ is the argmax inference output of CCM. Moreover, this derivative is non-increasing with $\mu$. Therefore, a necessary condition for CCM to reduce the margin is 
    \begin{equation}
        \-E[v(x,y_f)] = V_{01}(f) 
        > V_\ora
    \end{equation}

    \item 
    This follows directly by taking the difference between $M(f)$ and $M(f^\infty)$.

\end{enumerate}
\end{proof}

\begin{remark}
Due to the discontinuous nature of the argmax inference, the function $v(x,y_{f^\mu})$ is in general not continuous with $\mu$. On the other hand, if we assume $\mu \mapsto \-E[v(x,y_{f^\mu})]$ is Lipschitz continuous, the condition proposed in (a) is also sufficient, as in the analysis for cross-entropy.

The impact of constrained inference on the hinge loss can be investigated by substituting $f$ by $f+\ell$. For example, a sufficient for improving the average hinge loss will be $V_{01}(f+\ell) > V_\ora$.

The quantity $\left(\max_{y \notin C(x)} f(x,y) - \max_{y\in C(x)} f(x,y)\right)_+$ is closely related to the \emph{integrality loss} defined in \citet{MLWS19}. It is a hinge-stye surrogate loss function for the zero-one style violation function of $f$ with argmax inference:
\begin{equation}
\begin{aligned}
    \-P\left\{
        \max_{y \notin C(x)} f(x,y) - \max_{y\in C(x)} f(x,y) 
        \ge 0
    \right\}
    = V_{01}(f)
\end{aligned}
\end{equation}
\end{remark}

\subsection{$\ell^1$ Loss}
\label{l1-Loss}

To facilitate our discussion, we first present the following lemmas that will be useful in this section.

\begin{lemma}[Gradients of CCM]
\label{l1-gradient}
    For any constraint $C$ we have the following:
    \begin{enumerate}
        \item 
        The derivative of the predicted probability is 
        \begin{equation}
        \begin{aligned}
            \der{\mu} \-P_{f^\mu}(y|x) 
            = \-P_{f^\mu}(y) \left(\-P_{f^\mu}(-C|x) - v(x,y)\right)
        \end{aligned}
        \end{equation}

        \item 
        The second order derivative of the probability is 
        \begin{equation}
        \begin{aligned}
            \der{\mu} \-P_{f^\mu}(-C|x) 
            = \-P_{f^\mu}(y|x) \left(
                \left( \-P_{f^\mu}(-C|x) - v(x,y)\right)^2 + \-P_{f^\mu}^2(-C|x) -  \-P_{f^\mu}(-C|x) 
                \right)
        \end{aligned}
        \end{equation}
    \end{enumerate}
\end{lemma}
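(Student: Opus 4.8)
The plan is to derive both identities from the elementary softmax-gradient formula, exploiting that $f^\mu(x,y)=f(x,y)-\mu v(x,y)$ depends on $\mu$ only through the linear term $-\mu v(x,y)$ and that $v(x,y)=\-1\{y\notin C(x)\}$ is constant in $\mu$. I would fix $x$; if $C(x)=\+Y$ then $\-P_{f^\mu}(-C|x)\equiv 0$ and every derivative below vanishes, so assume $C(x)\neq\+Y$. Writing $Z_{f^\mu}(x)=\sum_{y'\in\+Y}\exp\bigl(f(x,y')-\mu v(x,y')\bigr)$, we have $\log\-P_{f^\mu}(y|x)=f(x,y)-\mu v(x,y)-\log Z_{f^\mu}(x)$, and differentiating the finite sum defining $Z_{f^\mu}$ term by term is immediate.

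For part (1), I would differentiate this log-probability in $\mu$. Since $\der{\mu}\log Z_{f^\mu}(x)=-\sum_{y'\in\+Y}\-P_{f^\mu}(y'|x)\,v(x,y')=-\-P_{f^\mu}(-C|x)$ — the last equality because $v(x,y')=1$ precisely on $y'\notin C(x)$ — we get $\der{\mu}\log\-P_{f^\mu}(y|x)=\-P_{f^\mu}(-C|x)-v(x,y)$, hence $\der{\mu}\-P_{f^\mu}(y|x)=\-P_{f^\mu}(y|x)\bigl(\-P_{f^\mu}(-C|x)-v(x,y)\bigr)$, which is the claimed formula. As an intermediate step I would sum this over $y\notin C(x)$, where $v(x,y)=1$, obtaining $\der{\mu}\-P_{f^\mu}(-C|x)=\-P_{f^\mu}(-C|x)\bigl(\-P_{f^\mu}(-C|x)-1\bigr)=\-P_{f^\mu}^2(-C|x)-\-P_{f^\mu}(-C|x)$.

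For part (2), I would differentiate the expression from part (1) once more, viewing it as the product of $\-P_{f^\mu}(y|x)$ and $g^\mu(x,y):=\-P_{f^\mu}(-C|x)-v(x,y)$. Because $v(x,y)$ is constant in $\mu$, $\der{\mu}g^\mu(x,y)=\der{\mu}\-P_{f^\mu}(-C|x)=\-P_{f^\mu}^2(-C|x)-\-P_{f^\mu}(-C|x)$ by the intermediate step, while $\der{\mu}\-P_{f^\mu}(y|x)=\-P_{f^\mu}(y|x)\,g^\mu(x,y)$ by part (1). The product rule then yields $\der{\mu}\bigl[\-P_{f^\mu}(y|x)\,g^\mu(x,y)\bigr]=\-P_{f^\mu}(y|x)\,\bigl(g^\mu(x,y)\bigr)^2+\-P_{f^\mu}(y|x)\bigl(\-P_{f^\mu}^2(-C|x)-\-P_{f^\mu}(-C|x)\bigr)$, which is exactly the stated right-hand side once $g^\mu(x,y)$ is expanded; the left-hand side here is the second-order derivative $\frac{d^2}{d\mu^2}\-P_{f^\mu}(y|x)$, i.e.\ the ``second order derivative of the probability'' referred to in the statement.

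The computation involves no genuine analytic difficulty — it is the softmax gradient identity followed by one product-rule step. The points needing care are purely bookkeeping: that $\sum_{y'}\-P_{f^\mu}(y'|x)v(x,y')=\-P_{f^\mu}(-C|x)$, that on $\{y'\notin C(x)\}$ one may substitute $v(x,y')=1$, and that $v(x,y)$ is piecewise constant in $\mu$ and so contributes no derivative; differentiation under the finite sums needs nothing beyond linearity, and the degenerate case $C(x)=\+Y$ is dispatched at the outset.
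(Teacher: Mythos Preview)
Your proposal is correct and essentially matches the paper's proof: both compute part (1) by direct differentiation of the softmax (you via the log-derivative, the paper via the quotient rule on $\e^{f(x,y)-\mu v(x,y)}/Z_{f^\mu}(x)$, using $\der{\mu}Z_{f^\mu}(x)=-Z_{f^\mu}^{-C}(x)$), and both obtain part (2) by applying the product rule to the expression from part (1). You also correctly interpret the left-hand side of part (2) as $\frac{d^2}{d\mu^2}\-P_{f^\mu}(y|x)$, which is indeed what the paper's own proof computes despite the typo in the displayed statement.
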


\begin{proof}

Recall that given any scoring function $f$, we denote $$Z_f^C(x) := \sum_{y \in C(x)} \exp(f(x,y))$$ and $$Z_f^{-C}(x) := \sum_{y \notin C(x)} \exp(f(x,y))$$
We also let $Z_f(x) = Z_f^C(x) + Z_f^{-C}(x)$.

\begin{enumerate}[label=(\alph*)]

    \item 
    The pointwise derivative of CCM's $l^1$ risk with respect to $\mu$ is then 
    \begin{equation}
    \begin{aligned}
        \der{\mu} \-P_{f^\mu}(y|x)
        = & \der{\mu} \frac{\e^{f(x,y) - \mu v(x,y)}}{Z_{f^\mu}(x)} \\ 
        = & \frac{1}{\left(Z_{f^\mu}(x)\right)^2} \left( Z_{f^\mu}(x) (-v(x,y) \e^{f(x,y) - \mu v(x,y)})  + Z_{f^\mu}^{-C}(x) \e^{f(x,y) - \mu v(x,y)} \right)\\ 
        = & \-P_{f^\mu}(y) \left(\-P_{f^\mu}(-C) - v(x,y)\right)
    \end{aligned}
    \end{equation}
    where the second equality follows from the fact that $\der{\mu} Z_{f^\mu}(x) = -Z_{f^\mu}^{-C}(x)$.

    \item
    Based on (a),
    \begin{equation}
        \begin{aligned}
            \frac{\dd^2}{\dd^2 \mu} \-P_{f^\mu}(y|x) 
            & = \left(\-P_{f^\mu}(y) \left(\-P_{f^\mu}(-C) - v(x,y)\right)\right)\left(\-P_{f^\mu}(-C) - v(x,y)\right) \\ 
            & \quad + \-P_{f^\mu}(y) \left(\-P_{f^\mu}^2(-C) -  \-P_{f^\mu}(-C)\right)\\
            & = \-P_{f^\mu}(y|x) \left(
                \left( \-P_{f^\mu}(-C|x) - v(x,y)\right)^2 + \-P_{f^\mu}^2(-C|x) -  \-P_{f^\mu}(-C|x) 
                \right)
        \end{aligned}
        \end{equation}
\end{enumerate}
\end{proof}

Now we discuss the change in $\ell^1$ risk that is defined as $\Delta^\mu(f):=R(f)-R(f^\mu)$.

\begin{theorem}[Change of $\ell^1$ Risk]
\label{change-of-risk-l1}
    The following results hold:
    \begin{enumerate}[label=(\alph*)]
        \item 
        For any fixed model $f$, there exists an $\mu_0 > 0$ such that $R(f^\mu) < R(f)$ if 
        \begin{equation}
        \begin{aligned}
            \-E[\-P_f(y_\ora)\-P_f(-C)] 
            > \-E[\-P_f(y_\ora)v(x,y_\ora)]
        \end{aligned}
        \end{equation}

        \item 
        The change of risk can be lower bounded by
        \begin{equation}
        \label{risk-bound-ce-loss}
        \begin{aligned}
            \Delta^\mu(f)
            \ge \frac{1-\e^{-2\mu}}{2}\-E_x[\-P_f(y_\ora)\-P_f(-C)] - \mu V_\ora
        \end{aligned}
        \end{equation}

        \item 
        In particular, if the constraint is noise-free, we have 
        \begin{equation}
        \begin{aligned}
            \Delta^\infty(f)
            \ge \-E_x[\-P_f(y_\ora)\-P_f(-C)] 
        \end{aligned}
        \end{equation}
    \end{enumerate}
\end{theorem}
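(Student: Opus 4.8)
The plan is to handle all three parts through the one-variable function $\mu \mapsto R(f^\mu)$. Since $R(f^\mu) = \-E[1 - \-P_{f^\mu}(y_\ora|x)]$ and both $\-P_{f^\mu}(y_\ora|x)$ and its $\mu$-derivative are bounded (the latter by $1$, via Lemma \ref{l1-gradient}), I can differentiate under the expectation; applying Lemma \ref{l1-gradient}(a) with $y = y_\ora$ gives
\begin{equation}
\der{\mu}\Delta^\mu(f) \;=\; -\der{\mu}R(f^\mu) \;=\; \-E\!\left[\-P_{f^\mu}(y_\ora|x)\bigl(\-P_{f^\mu}(-C|x) - v(x,y_\ora)\bigr)\right],
\end{equation}
and, since $\Delta^0(f) = 0$, we have $\Delta^\mu(f) = \int_0^\mu \der{t}\Delta^t(f)\,\dd t$. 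Everything below is extracted from this identity.

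For (a), evaluate the derivative at $\mu = 0$: it equals $\-E[\-P_f(y_\ora)\-P_f(-C)] - \-E[\-P_f(y_\ora)v(x,y_\ora)]$, which is strictly positive precisely under the stated hypothesis. As $\mu \mapsto \Delta^\mu(f)$ is $C^1$ and vanishes at $0$, a positive derivative at $0$ forces $\Delta^{\mu_0}(f) > 0$, i.e.\ $R(f^{\mu_0}) < R(f)$, for all small enough $\mu_0 > 0$. In contrast to Theorem \ref{change-of-risk}, the derivative here need not be monotone in $\mu$ (its derivative, from Lemma \ref{l1-gradient}(b), can change sign), so this is only a sufficient condition.

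For (b), split $\der{t}\Delta^t(f)$ as $\-E[\-P_{f^t}(y_\ora)\-P_{f^t}(-C)]$ minus $\-E[\-P_{f^t}(y_\ora)v(x,y_\ora)]$; the latter is $\le V_\ora$ because $\-P_{f^t}(y_\ora) \le 1$, so it integrates to at most $\mu V_\ora$. The crux is a pointwise lower bound on the first term. With $Z_f^C(x), Z_f^{-C}(x)$ the partition sums over $C(x)$ and its complement, the CCM penalty leaves $Z_{f^t}^C(x) = Z_f^C(x)$ and rescales $Z_{f^t}^{-C}(x) = \e^{-t}Z_f^{-C}(x)$, and when $y_\ora \notin C(x)$ it also multiplies the $y_\ora$-weight by $\e^{-t}$. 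Using the elementary bound $Z_f^C(x) + \e^{-t}Z_f^{-C}(x) \le Z_f^C(x) + Z_f^{-C}(x)$ in the denominators, a two-line case split on whether $y_\ora \in C(x)$ gives
\begin{equation}
\-P_{f^t}(y_\ora|x)\,\-P_{f^t}(-C|x) \;\ge\; \e^{-2t}\,\-P_f(y_\ora|x)\,\-P_f(-C|x),
\end{equation}
with the stronger factor $\e^{-t}$ in the case $y_\ora \in C(x)$ (only $Z^{-C}$ shrinks) and $\e^{-2t}$ when $y_\ora \notin C(x)$ (both the $y_\ora$-weight and $Z^{-C}$ shrink). Taking expectations, integrating, and using $\int_0^\mu \e^{-2t}\,\dd t = (1-\e^{-2\mu})/2$ yields the claimed bound. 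Getting the exponent right in this case analysis is the main obstacle; the rest is bookkeeping.

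For (c), noise-freeness gives $v(x,y_\ora) = 0$ almost surely, so the subtracted term vanishes and only the case $y_\ora \in C(x)$ occurs, i.e.\ the pointwise bound holds with factor $\e^{-t}$; since $\int_0^\infty \e^{-t}\,\dd t = 1$, letting $\mu \to \infty$ gives $\Delta^\infty(f) \ge \-E[\-P_f(y_\ora)\-P_f(-C)]$. Equivalently and more directly, strictly constrained inference sends $\-P_{f^\infty}(y_\ora|x) = \e^{f(x,y_\ora)}/Z_f^C(x)$, whence $\-P_{f^\infty}(y_\ora|x) - \-P_f(y_\ora|x) = \-P_f(y_\ora|x)\-P_f(-C|x)/\-P_f(C|x) \ge \-P_f(y_\ora|x)\-P_f(-C|x)$, and integrating over $x$ closes the argument.
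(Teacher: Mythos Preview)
Your proof is correct and follows essentially the same route as the paper's: both differentiate $\Delta^\mu(f)$ via Lemma~\ref{l1-gradient}, evaluate the derivative at $\mu=0$ for part~(a), lower-bound the integrand pointwise by $\e^{-2t}\-P_f(y_\ora)\-P_f(-C)$ and upper-bound the subtracted term by $V_\ora$ for part~(b), and sharpen the exponent to $\e^{-t}$ under noise-freeness for part~(c). The only cosmetic difference is that you obtain the $\e^{-2t}$ factor by an explicit case split on $y_\ora \in C(x)$, whereas the paper uses the one-line bound $\e^{f(x,y_\ora)-\mu v(x,y_\ora)} \ge \e^{f(x,y_\ora)-\mu}$ valid for all $y_\ora$; your additional direct computation for~(c) is a nice alternative not in the paper.
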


\begin{proof}
\begin{enumerate}[label=(\alph*)]
    \item 
    From Lemma \ref{l1-gradient} (a) we know the derivative of the risk with respect to $\mu$ at $\mu=0$ is 
    \begin{equation}
    \begin{aligned}
        \-E[\-P_f(y_\ora)\-P_f(-C)] - \-E[\-P_f(y_\ora)v(x,y_\ora)]
    \end{aligned}
    \end{equation}
    Further, Lemma \ref{l1-gradient} (b) implies this derivative is Lipschitz with respect to $\mu$ since for any $\mu$,
    \begin{equation}
    \begin{aligned}
        \left| \-P_{f^\mu}(y|x) \left(
                \left( \-P_{f}(-C|x) - v(x,y)\right)^2 + \-P_{f^\mu}^2(-C|x) -  \-P_{f^\mu}(-C|x) 
                \right) \right| 
            \le 1
    \end{aligned}
    \end{equation} 
    Therefore, a sufficient condition for the existence of an $\mu_0 > 0$ such that $R(f^\mu) < R(f)$ is that $\-E[\-P_f(y_\ora)\-P_f(-C)] > \-E[\-P_f(y_\ora)v(x,y_\ora)]$.

    \item 
    First, we note for any $y$ and $\mu$ that
    \begin{equation}
    \begin{aligned}
        \-P_{f^\mu}(y)\-P_{f^\mu}(-C)
        & = \frac{\e^{f(x,y)-\mu v(x,y)} Z_f^{-C}(x)/\e^\mu}{\left(Z_{f^\mu}(x)\right)^2} \\
        & \ge \frac{\e^{f(x,y)-\mu v(x,y)} Z_f^{-C}(x)/\e^\mu}{\left(Z_{f}(x)\right)^2} \\ 
        & \ge \frac{\e^{f(x,y)-\mu } Z_f^{-C}(x)/\e^\mu}{\left(Z_{f}(x)\right)^2} \\ 
        & = \-P_f(y)\-P_f(-C)\e^{-2\mu}
    \end{aligned}
    \end{equation}
    Also, 
    \begin{equation}
    \begin{aligned}
        \-E[\-P_f(y_\ora)v(x,y_\ora)] 
        \le \-E[v(x,y_\ora)] 
        = V_\ora
    \end{aligned}
    \end{equation}
    Integrating the derivative gives 
    \begin{equation}
    \begin{aligned}
        \Delta^\mu(f)
        & \ge \int^\mu_0 \-E\left[
            \-P_f(y_\ora)\-P_f(-C)\e^{-2t} - V_\ora
        \right] \dd t \\
        & = \frac{1-\e^{-2\mu}}{2}\-E_x[\-P_f(y_\ora)\-P_f(-C)] - \mu V_\ora
    \end{aligned}
    \end{equation}

    \item 
    With noise-free constraints,
    \begin{equation}
        \begin{aligned}
            \-P_{f^\mu}(y_\ora)\-P_{f^\mu}(-C)
            & = \frac{\e^{f(x,y_\ora)} Z_f^{-C}(x)/\e^\mu}{\left(Z_{f^\mu}(x)\right)^2} \\
            & \ge \frac{\e^{f(x,y_\ora)} Z_f^{-C}(x)/\e^\mu}{\left(Z_{f}(x)\right)^2} \\ 
            & = \-P_f(y_\ora)\-P_f(-C)\e^{-\mu}
    \end{aligned}
    \end{equation}
    Integrating both sides gives 
    \begin{equation}
        \begin{aligned}
            \Delta^\mu(f)
            & \ge \int^\mu_0 \-E\left[
                \-P_f(y_\ora)\-P_f(-C)\e^{-t} 
            \right] \dd t \\
            & = \-E_x[\-P_f(y_\ora)\-P_f(-C)]
    \end{aligned}
    \end{equation}
\end{enumerate}
\end{proof}

The term $\-E_x[\-P_f(y_\ora)\-P_f(-C)]$ plays a key role in these results, and it measures the average violation of the model $f$, weighted by the model's confidence of the true label. The first result shows that if this weighted average violation is larger than that of the true data distribution, then CCM is helpful. The last result shows that a model with a larger weighted violation obtains more benefits from strictly constrained inference.

\section{Proofs from Section 5}

\subsection{Proof of Theorem \ref{combo-help}}

Recall $f_\star^\mu = \argmin_{g\in\+F^\mu} R_\ce(g) + \rho V_\ce(g)$ is the optimal CCM for the regularized surrogate objective and $\post$ is the cross entropy risk minimizer in $\+F$. According to our notation, $\post^\mu$ is the constrained model with base model $\post$. 
By this definition, we have 
\begin{equation}
\begin{aligned}
    R_\ce(f_\star^\mu ) +\rho V_\ce(f_\star^\mu)
    \le R_\ce(\post^\mu) +\rho V_\ce(\post^\mu)
\end{aligned}
\end{equation}
Therefore,
\begin{equation}
\label{sufficient-condition-combo}
\begin{aligned}
    R_\ce(f_\star^\mu) 
    & \le R_\ce(\post^\mu) + \rho (V_\ce(\post^\mu) - V_\ce(f_\infty^\mu)) \\
    & \le R_\ce(\post^\mu) + \rho V_\ce(\post^\mu) \\ 
    & \le R_\ce(\post) - \Delta_\ce^\mu(\post) + \rho V_\ce(\post^\mu) 
\end{aligned}
\end{equation}
Therefore, a sufficient condition for $R_\ce(f_\star^\mu) \le R_\ce(\post)$ is that $ \rho V_\ce(\post^\mu) < \Delta_\ce^\mu(\post)$. Furthermore, recall for any scoring function $f$, we define $Z_f^C(x) := \sum_{y \in C(x)} \exp(f(x,y))$ and $Z_f^{-C}(x) := \sum_{y \notin C(x)} \exp(f(x,y))$. We then have
\begin{equation}
\begin{aligned}
    V_\ce(f) - V_\ce(f^\mu) 
    & = \-E\left[
        -\log \left( \frac{Z_f^C(x)}{Z_f^C(x) + Z_f^{-C}(x)} \right)
    \right] - \-E\left[
        -\log \left( \frac{Z_f^C(x)}{Z_f^C(x) + Z_f^{-C}(x)/\e^\mu}\right)
    \right] \\ 
    & = \-E\left[
        -\log \left( \frac{Z_f^C(x) + Z_f^{-C}(x)/\e^\mu}{Z_f^C(x) + Z_f^{-C}(x)} \right)
    \right] \\ 
    & = \int^\mu_0 \-E\left[
        \frac{Z^{-C}_f(x)/\e^t}{Z_f^C(x) + Z_f^{-C}(x)/\e^t}\right] \dd t \\ 
    & = \Delta^\mu_\ce(f) + \mu V_\ora \quad \quad \quad \quad  \text{(compare to equation \eqref{ce-derivative})}
\end{aligned}
\end{equation}
Therefore, $\Delta^\mu_\ce(\post) =  V_\ce(\post) - V_\ce(\post^\mu) -  \mu V_\ora$. So, the sufficient condition can be reformulated as 
\begin{equation}
\begin{aligned}
    \rho 
    < \frac{V_\ce(\post) - V_\ce(\post^\mu) - \mu V_\ora }{V_\ce(\post^\mu)}
\end{aligned}
\end{equation}

\subsection{Proof of Theorem \ref{combo-not-help}}

We have seen in Theorem \ref{change-of-risk} that for any scoring function $f$, there is a $\mu > 0$ such that $R_\ce(f^\mu) < R_\ce(f)$ if and only if $V(f) \ge V_\ora$. On the other hand, we know from Lemma \ref{f-rho-violation} that
\begin{equation}
\begin{aligned}
    V(f_\rho) 
    \le V(f_\infty) + \frac{1}{\rho}
\end{aligned}
\end{equation}
Therefore, if
\begin{equation}
\begin{aligned}
    \rho 
    \ge \frac{1}{V_\ora - V(f_\infty)}
\end{aligned}
\end{equation}
we must have $V(f_\rho) \le V_\ora$, which implies there is no $\mu > 0$ such that $R_\ce((f_\rho)^\mu) < R_\ce(f_\rho)$.

\end{document}